\pgfplotsset{compat=1.18}
\theoremstyle{plain}
\newtheorem{theorem}{Theorem}[section]
\newtheorem{proposition}[theorem]{Proposition}
\theoremstyle{definition}
\theoremstyle{remark}
\newtheorem{remark}[theorem]{Remark}
\newcommand{\methodname}{\textbf{\textsc{CoRe}}}
\newcommand{\fullmethodname}{Collaborative Reasoning}
\icmltitlerunning{\methodname: Collaborative Reasoning via Cross Teaching}
\begin{document}

\twocolumn[
\icmltitle{\methodname: Collaborative Reasoning via Cross Teaching}

% The style file will automatically remove author information for blind review.
\begin{icmlauthorlist}
  \icmlauthor{Kshitij Mishra}{mbzuai}
  \icmlauthor{Mirat Aubakirov}{mbzuai}
  \icmlauthor{\texorpdfstring{Martin Tak\'a\v{c}}{Martin Takac}}{mbzuai}
  \icmlauthor{Nils Lukas}{mbzuai}
  \icmlauthor{Salem Lahlou}{mbzuai}
\end{icmlauthorlist}

\icmlaffiliation{mbzuai}{Mohamed bin Zayed University of Artificial Intelligence, Abu Dhabi, United Arab Emirates}
\icmlcorrespondingauthor{Kshitij Mishra}{kshitij.mishra@mbzuai.ac.ae}
\icmlcorrespondingauthor{Salem Lahlou}{salem.lahlou@mbzuai.ac.ae}

\hypersetup{pdfauthor={Kshitij Mishra, Mirat Aubakirov, Martin Takac, Nils Lukas, Salem Lahlou}}

\icmlkeywords{collaborative reasoning, reinforcement learning for LLMs, policy optimization, diversity, ensembles}

\vskip 0.3in
]

\printAffiliationsAndNotice{}   % required

% \begin{abstract}
% Large language models exhibit complementary reasoning errors: on the same instance, one model may succeed with a particular decomposition while another fails.
% We propose \fullmethodname{} (\methodname), a \emph{training-time} collaboration framework that converts peer success into a learning signal via \textbf{cross-teaching} protocol.
% Each problem is solved in two stages: a \emph{cold} round of independent sampling followed by a \emph{contexted rescue} round where failed models receive a hint extracted from a successful peer.
% \methodname{} optimizes a combined reward that balances (i) correctness, (ii) a lightweight DPP-inspired diversity term to reduce error overlap, and (iii) an explicit rescue bonus for successful recovery. 
% We evaluate \methodname{} in low-data regimes across four datasets - GSM8K, MATH, AIME, and GPQA.
% With only 1{,}000 training examples, a pair of small open models (\texttt{3B} + \texttt{4B}) reaches Pass@2 value of \textbf{99.62\%} on GSM8K and \textbf{95.32\%} on MATH, while single-model achieve \textbf{93\%} and \textbf{83\%} respectively.
% On harder datasets, \texttt{3B} + \texttt{4B} reaches Pass@2 \textbf{77\%} on GPQA (trained on 348 samples) and \textbf{93\%} on AIME (trained on 792 samples) with max only 1024 and 2048- context and response tokens budget at the train time. 
% These results show that training-time collaboration can turn model complementarity into high reliable gains without scaling model size. 
% \end{abstract}

\begin{abstract}
Large language models exhibit complementary reasoning errors: on the same instance, one model may succeed with a particular decomposition while another fails.
We propose \fullmethodname{} (\methodname), a \emph{training-time} collaboration framework that converts peer success into a learning signal via a \textbf{cross-teaching} protocol.
Each problem is solved in two stages: a \emph{cold} round of independent sampling, followed by a \emph{contexted rescue} round in which failed models receive a hint extracted from a successful peer.
\methodname{} optimizes a combined reward that balances (i) correctness, (ii) a lightweight DPP-inspired diversity term to reduce error overlap, and (iii) an explicit rescue bonus for successful recovery.
We evaluate \methodname{} in low-data regimes across GSM8K, MATH, AIME, and GPQA.
With only 1{,}000 training examples, a pair of small open-source models (\texttt{3B}+\texttt{4B}) reaches \textbf{99.54\%} oracle Team Pass@2 on GSM8K and \textbf{92.08\%} on MATH, compared to \textbf{82.50\%} and \textbf{74.82\%} for single-model training.
On harder datasets, the same pair reaches \textbf{77.34\%} oracle Team Pass@2 on GPQA (trained on 348 examples) and \textbf{79.65\%} on AIME (trained on 792 examples), using a training-time budget of at most 1536 context tokens and 3072 generated tokens.
A three-model non-oracle AIME evaluation further closes most of the oracle-selection gap (81.20\% majority vote vs. 82.60\% oracle Team Pass@2).
Overall, these results show that training-time collaboration can reliably convert model complementarity into large gains without scaling model size.
\end{abstract}

\section{Introduction}

Large language models (LLMs) can solve many multi-step reasoning problems when prompted to produce intermediate steps (chain-of-thought; \citealt{wei2022chain}).
However, even strong models exhibit systematic and complementary failures: different architectures, pretraining mixes, and post-training pipelines lead to distinct error modes on the \emph{same} dataset.
In practice, ensembles and multi-agent prompting exploit this diversity at inference time (e.g., self-consistency \citealt{wang2023selfconsistency}, ranking-and-fusion \citealt{jiang2023llmblender}, and debate-style interactions \citealt{irving2018ai}).
A natural next question is \textit{whether we can push this complementarity \emph{into training} so that models become better collaborators}.

\begin{figure}[t]
  \centering
  \includegraphics[width=\columnwidth]{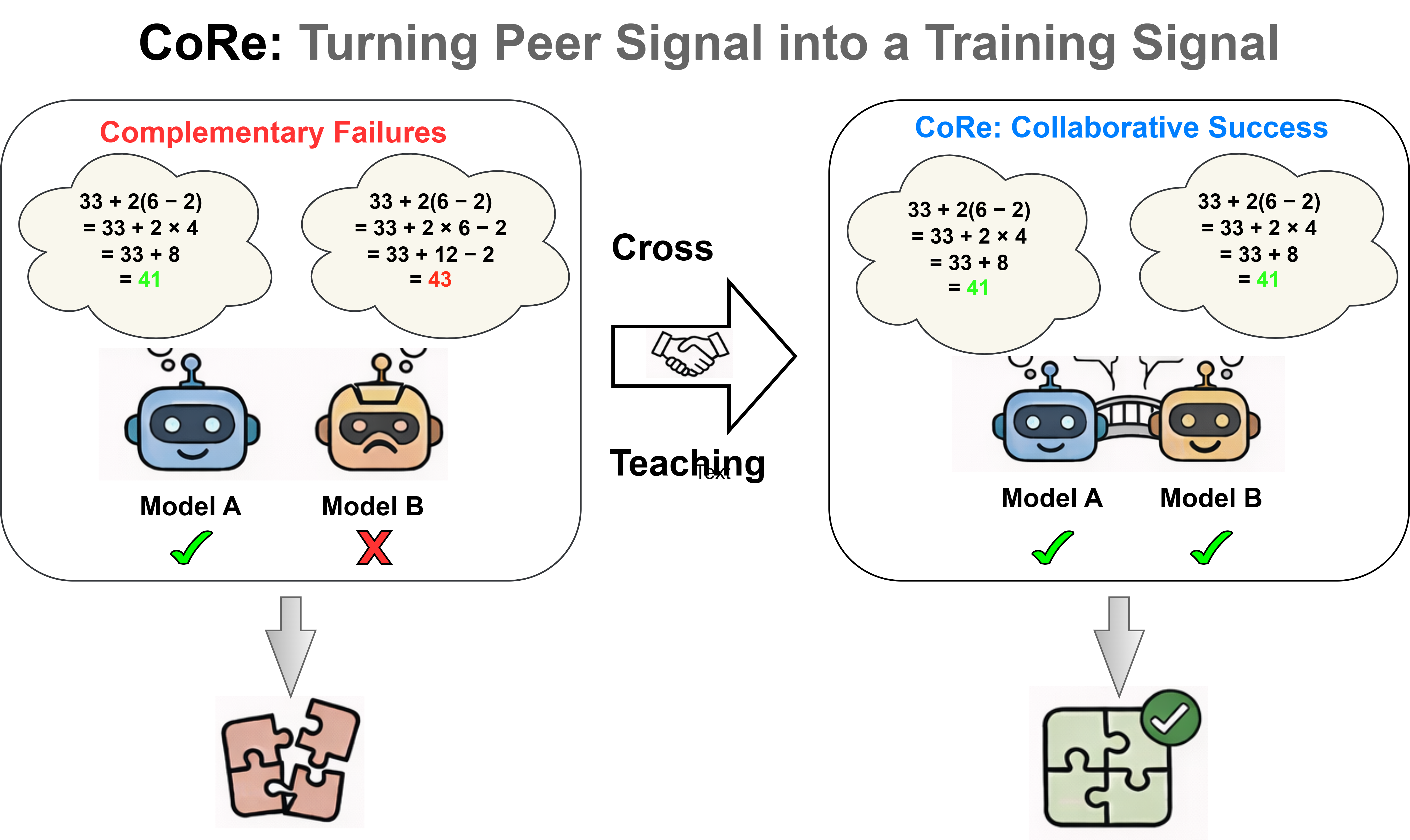}
  \caption{\textbf{\methodname{}}: Collaborative Reasoning via Cross Teaching Training.} 
  \label{fig:framework1}
\end{figure}

Most recent gains in LLM reasoning come from inference-time scaffolding (more samples, search, or multi-agent prompting), which increases compute but does not change the underlying policy \citep{wang2023selfconsistency,yao2023tree,du2024debate}.
These methods often break under \emph{correlated} failure: if a model misses a key insight, additional samples and even additional agents can converge to the same wrong plan; more generally, ensemble gains are limited when member errors are correlated \citep{dietterich2000ensemble,kuncheva2003diversity}.
Training-time collaboration is challenging due to sparse correctness supervision (motivating process-level feedback), multi-model credit assignment, and the risk of diversity collapse through mutual imitation or agreement-style training \citep{lightman2024verify,foerster2018coma,hinton2015distilling,zhang2018deep}.
This motivates a reinforcement learning (RL) view: we must jointly optimize (i) exploitation for correctness, (ii) exploration for complementary traces that reduce overlap, and (iii) rescue-specific reward on peer-disagreement cases.
% Our micro-round protocol instantiates this principle with compressed hints and hint dropout.

% Most recent gains in LLM reasoning come from inference-time scaffolding (more samples, search, or multi-agent prompting), which increases compute but does not change the underlying policy. These methods often break under \emph{correlated} failure: if the model misses a key insight, additional samples and even additional agents can converge to the same wrong plan. Training-time collaboration is challenging due to sparse correctness supervision, multi-model credit assignment, and the risk of diversity collapse through mutual imitation. This motivates an RL view: we must jointly optimize (i) exploitation for correctness, (ii) exploration for complementary traces that reduce overlap, and (iii) rescue-specific reward on peer-disagreement cases. Our micro-round protocol instantiates this principle with compressed hints and hint dropout.

We propose \fullmethodname{} (\methodname) (Figure \ref{fig:framework1}), an algorithm for training multiple LLMs to collaborate on reasoning tasks. \methodname{} is built around a simple micro-round protocol: (1) in a \emph{cold} round, each model samples several independent solution traces; (2) if at least one model succeeds, we extract out a successful trace as hint; and (3) in a \emph{contexted} round, each model resamples with hint, and models that previously failed are eligible for an explicit \emph{rescue bonus} when they recover. The key intuition is that "peer success" provides a targeted supervision signal precisely on the instances where one model needs help.

Our goal is not only to improve single-model accuracy but also to reduce \emph{error overlap} between collaborators, thereby increasing the probability that \emph{at least one} model solves a problem. We operationalize this with an explore--exploit reward design. Exploitation rewards correctness (with optional partial credit), while exploration rewards encourage diverse reasoning traces using a greedy Determinantal Point Process (DPP) approximation. Our \textit{key} contributions are three-fold:
\begin{enumerate}
    \item We propose \fullmethodname{} (\methodname), a novel algorithm to jointly train $N$ LLM policies with a two-round protocol--a \emph{cold} round of independent sampling followed by a \emph{contexted rescue} round where failed models learn from hints extracted from successful peers, with \emph{hint dropout} to improve robustness.
    \item We propose a combined reward that simultaneously optimizes (i) correctness, (ii) exploration-exploitation of complementary traces (reducing correlated failures), and (iii) an explicit \emph{rescue bonus} to incentivize successful recovery on peer-disagreement.
    \item Extensive empirical analysis that show across GSM8K, MATH, AIME, GPQA, and MedMCQA, \methodname{} consistently improves collaborative performance, achieving \textbf{99.54\% Oracle Team Pass@2} on GSM8K and strong gains on harder benchmarks (e.g., \textbf{92.08\%} on MATH, \textbf{79.65\%} on AIME, and \textbf{77.34\%} on GPQA), while training on no more than 1{,}000 examples.
\end{enumerate}

%Our proposed algorithm is compatible with a range of policy optimization losses used in RL and reasoning-focused RL: token-level GRPO \cite{shao2024deepseekmath}, sequence-level GSPO \cite{zheng2025gspo}, and soft gating SAPO \cite{gao2025sapo},  enabling collaboration without committing to a single optimizer family.

% (3) \textbf{Loss-agnostic optimization for reasoning-focused RL.}
% \methodname{} is compatible with common group-based policy objectives used in RLHF and reasoning RL, including token-level GRPO, sequence-level GSPO, and soft-gated SAPO, enabling collaboration without committing to a single optimizer family.
% (4) \textbf{Empirical gains across domains under a low-data regime.}
%  seen once (Appendix).

\section{Related Work}

\paragraph{Reasoning via prompting, search, and inference-time ensembling.}
Prompting methods elicit intermediate reasoning traces, most notably chain-of-thought \citep{wei2022chain}, and can be strengthened with decompositional prompting such as least-to-most \citep{zhou2023leasttomost}.
Sampling-based inference improvements like self-consistency \citep{wang2023selfconsistency} reduce variance by aggregating multiple traces, while search-based frameworks explicitly explore reasoning branches (e.g., tree-of-thought \citep{yao2023tree}).
Other lines interleave reasoning with actions or external tools (e.g., ReAct \citep{yao2023react}).
Model combination methods further improve accuracy by selecting or fusing candidate generations (e.g., ranking-and-fusion in LLM-Blender \citep{jiang2023llmblender}).
In contrast, \methodname{} moves collaboration \emph{into training}: peer successes become an on-policy learning signal (via hints + rescue), rather than only an inference-time aggregation trick.

\paragraph{Reinforcement learning and preference optimization for LLM post-training.}
Alignment and post-training commonly rely on RLHF-style objectives \citep{ouyang2022training} or preference optimization such as DPO \citep{rafailov2023direct}.
For reasoning-focused post-training, group-based policy optimization (e.g., GRPO) has shown strong empirical performance \citep{shao2024deepseekmath}, and recent work proposes more stable variants such as sequence-level clipping (GSPO) \citep{zheng2025gspo} and smooth, token-adaptive trust-region control (SAPO) \citep{gao2025sapo}. Prior work has also applied preference optimization directly to chain-of-thought traces to improve mathematical reasoning \citep{lahlou2025port}.
\methodname{} is largely orthogonal to these optimizer choices: we contribute a \emph{multi-model data collection protocol} at training time itself and reward shaping aimed at reducing correlated failures.
%which can be instantiated with any of these policy-gradient losses. 

\paragraph{Diversity, exploration, and reducing correlated failures.}
Diversity is a classic lever for improving ensemble performance and reducing error correlation \citep{dietterich2000ensemble,kuncheva2003measures}.
In sequence generation, diverse decoding procedures such as Diverse Beam Search encourage exploration of multiple modes \citep{vijayakumar2016diversebeam}.
Determinantal point processes (DPPs) provide a principled repulsive model for selecting diverse subsets \citep{kulesza2012dpp}.
The recent work SD-E$^2$ \cite{mishra2026sde2semanticexplorationreasoning} explores diverse reasoning traces within a single small model.
Building on these ideas, we introduce a lightweight greedy approximation (\emph{DPP-lite}) over hybrid semantic/structural distances between reasoning traces, and use it as an explicit exploration reward to reduce \emph{overlap} between collaborators rather than merely diversifying outputs at inference time.

%%%%%%%%%%%%%%%%%%%%%%%%%%%%%%%%%%%%%%%%%%%%%%%%%%%%%%

\begin{figure*}[t]
  \centering
  \includegraphics[width=\textwidth,height=0.5\textheight,keepaspectratio]{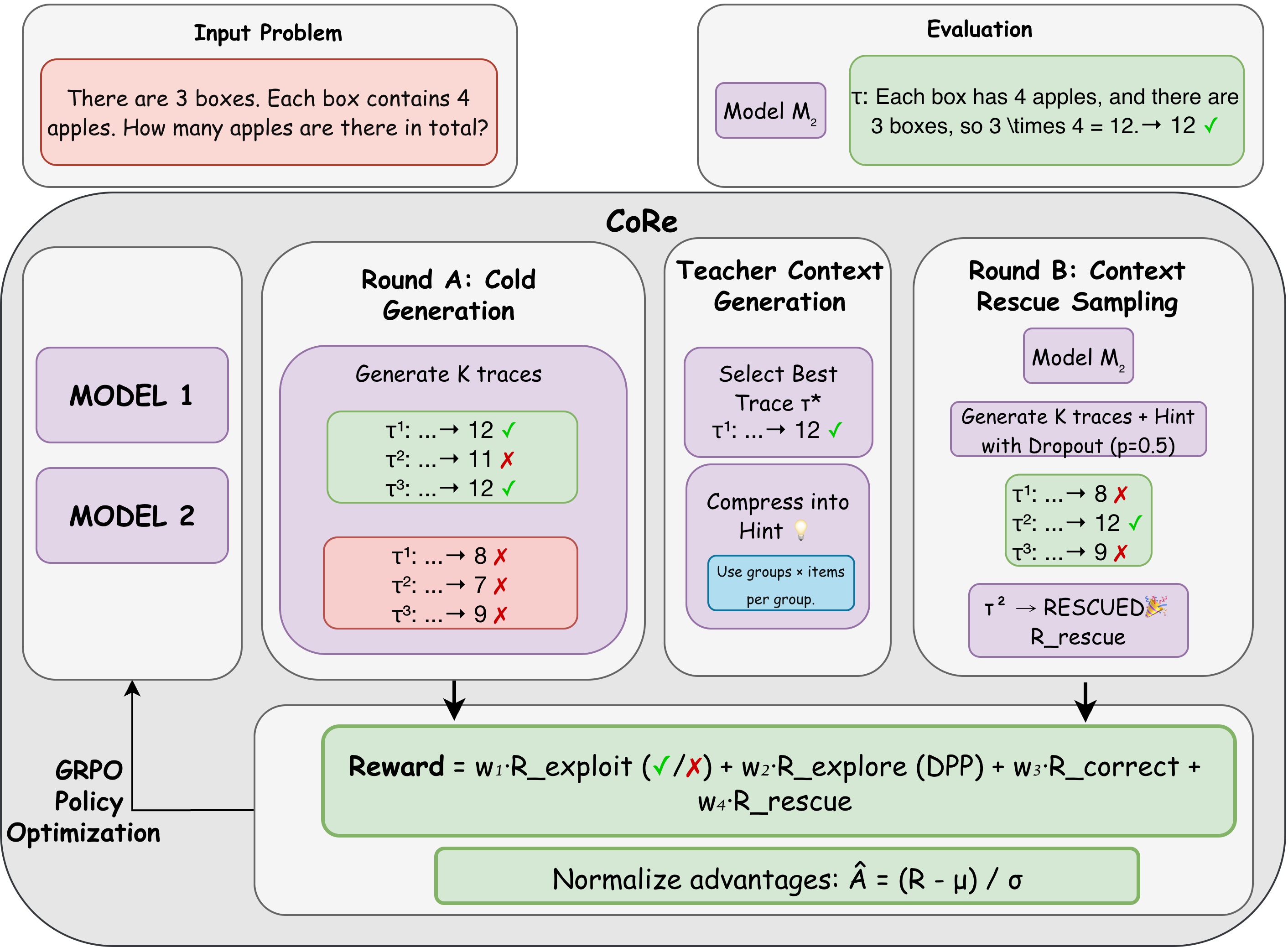}
  \caption{\textbf{\methodname{} training.} In \textsc{Round A}, each model samples $K$ independent traces. If any trace succeeds, the best one is used as teacher context for \textsc{Round B}, where models resample with explore--exploit rewards and a rescue bonus for successful recoveries.}
  \label{fig:framework2}
\end{figure*}
%%%%%%%%%%%%%%%%%%%%%%%%%%%%%%%%%%%%%%%%%%%%%%%%%%%%%%%%

\paragraph{Multi-agent and multi-model collaboration.}
Multi-agent prompting and debate-style protocols can improve factuality and reasoning at inference time by having multiple instances interact \citep{irving2018ai,du2023multiagentdebate}.
Systems work has also proposed general multi-agent orchestration frameworks (e.g., CAMEL \citep{li2023camel} and AutoGen \citep{wu2023autogen}) that coordinate role-based collaboration, tool use, and iterative refinement.
Closer to our setting, MALT trains a sequential generator--verifier--refiner pipeline using trajectory expansion and value-style credit assignment \citep{motwani2025malt}, while MAPoRL co-trains multiple LLMs with multi-agent RL and verifier rewards over multi-turn discussions \citep{park2025maporl}.
\methodname{} differs in its training signal: peer success on the \emph{same instance} is compressed into a hint for failed collaborators, and the rescue reward explicitly optimizes recovery on disagreement cases.
Thus, \methodname{} is not an inference-time discussion protocol or a fixed role pipeline; it is an on-policy cross-teaching protocol that reshapes the collaborators' policies during post-training.

\paragraph{Collaborative learning, distillation, and self-training with rationales.}
Knowledge distillation and peer learning transfer information across models by matching outputs or training on teacher-generated data \citep{hinton2015distilling,zhang2018deep,furlanello2018born}.
For reasoning, prior work also leverages \emph{rationales} as supervision via bootstrapping loops (STaR) \citep{zelikman2022star} or distilling step-by-step solutions \citep{hsieh2023distilling}.
\methodname{} differs in two key ways: (i) peer supervision is generated \emph{on-policy} and \emph{instance-conditioned} (only when a peer succeeds on the same example), and (ii) learning is explicitly targeted to the \emph{disagreement set} via rescue bonuses.

\section{Problem Formulation}

Let $\mathcal{D}=\{(x,y^*)\}$ denote a dataset of reasoning problems.
We train a set of $N$ autoregressive policies $\{\pi_{\theta_i}\}_{i=1}^N$.
A \emph{reasoning trace} $\tau$ is a generated token sequence (including intermediate steps) and induces a final answer via an extraction function $\mathrm{Ans}(\tau)$.
We define correctness as $\mathbb{I}[\mathrm{Ans}(\tau)=y^*]$.

\paragraph{Team success.}
A cooperative objective is to maximize the probability that \emph{at least one} model solves a problem:
\begin{equation}
\label{eq:team_obj}
\max_{\theta_1,\dots,\theta_N}\ 
\mathbb{E}_{(x,y^*)\sim\mathcal{D}}
\Big[
\mathbb{I}\big[\exists i\ \mathrm{s.t.}\ \mathrm{Ans}(\tau_i)=y^*\big]
\Big],
\end{equation}
where $\tau_i\sim \pi_{\theta_i}(\cdot\mid x)$.
This objective is sparse and does not indicate how to credit individual traces or encourage complementary reasoning.
We therefore optimize a dense, per-trace surrogate reward. 
% and structure data collection so that \emph{peer success provides supervision} exactly on disagreement instances.

% \paragraph{Micro-round supervision signal.}
For each problem $x$, models first sample a \emph{cold} set of traces, then optionally resample in a \emph{context-conditioned} rescue round using a hint derived from a successful peer.
Rescue traces are explicitly rewarded, steering learning toward instances where one model can help another. This turns collaboration from an inference-time heuristic into a training-time objective aligned with \cref{eq:team_obj}.

\section{Method}

For each training problem $x$, \methodname{} collects traces in two micro-rounds.

\paragraph{\textsc{Round A} (cold).}
Each model $i$ samples $K$ traces independently:
$\{\tau^{A}_{i,k}\}_{k=1}^K \sim \pi_{\theta_i}(\cdot\mid x)$.
We compute rewards for all sampled traces and identify whether the model succeeded:
$s_i = \max_k \mathbb{I}[\mathrm{Ans}(\tau^{A}_{i,k})=y^*]$.

\paragraph{Teacher context construction.}
If any model succeeds in \textsc{Round A}, we select the highest-reward correct trace and build a teacher context $c$ with a length budget $L$ tokens.

% For multi-strategy prompts, we extract the \emph{correct strategy block}.
% We optionally strip explicit answer lines and truncate the hint to fit both a configurable context budget and the model’s prompt length.

\paragraph{\textsc{Round B} (contexted rescue).}
Each model samples $K'$ additional traces conditioned on the context with \emph{hint dropout}:
with probability $p_{\text{hint}}$ the model sees $c$, otherwise it sees the original prompt.
A \textsc{Round B} trace is marked as \emph{rescue-eligible} only when the model failed in \textsc{Round A} ($s_i=0$) and the hint is provided.
% Successful rescue traces are optionally stored in a \emph{buddy buffer} used for later distillation.

\subsection{Reward Design}

\paragraph{Exploitation reward.}
For a trace $\tau$ we compute
\begin{equation}
R_{\text{et}}(\tau)
=
\mathbb{I}[\mathrm{Ans}(\tau)=y^*]
+
\alpha\, \mathrm{Pl}(\mathrm{Ans}(\tau),y^*),
\end{equation}
where $\mathrm{Pl}(\cdot,\cdot)\in[0,1]$ is a lightweight overlap-based partial score and $\alpha\in[0,1]$ downweights partial credit (default $\alpha=0.3$).

\paragraph{Exploration reward (DPP-lite).}
Given the set of traces for a single problem, we construct a diverse subset $S$ with a greedy DPP approximation.
For any trace $\tau\notin S$, we define
\begin{equation}
R_{\text{ee}}(\tau)
=
\max\!\Big(
0,\ 
\min_{\tau'\in S} d(\tau,\tau') - \delta
\Big),
\end{equation}
where $d$ is a hybrid semantic/structural distance.
Traces in $S$ receive zero exploration reward.

\paragraph{Rescue bonus.}
If a model fails in \textsc{Round A} but produces a correct trace in \textsc{Round B} \emph{with the hint provided}, we add a fixed rescue bonus $r_{\text{teach}}$.

\paragraph{Combined reward.}
We define a binary rescue indicator $z(\tau)\in\{0,1\}$.
The combined reward is
\begin{equation}
R(\tau)
=
w_{1} R_{\text{et}}(\tau)
+
w_{2} R_{\text{ee}}(\tau)
+
r_{\text{teach}} z(\tau)
+
R_{\text{aux}}(\tau).
\end{equation}

\paragraph{Auxiliary rewards.}
We optionally include a trace-accuracy bonus and a cross-model complementarity term:
\begin{equation}
\label{eq:aux_reward}
\begin{aligned}
R_{\text{aux}}(\tau^{r}_{i,k})
&=
\lambda_{\text{ta}}\, \mathbb{I}[\mathrm{Ans}(\tau^{r}_{i,k})=y^*]\,(1-\rho_r(x)) \\
&\quad+
\lambda_{\text{cm}}\, R_{\text{cross}}(\tau^{r}_{i,k}),
\end{aligned}
\end{equation}
where $r\in\{A,B\}$, $K_A=K$, $K_B=K'$, and
$\rho_r(x)=\frac{1}{N K_r}\sum_{j,k'}\mathbb{I}[\mathrm{Ans}(\tau^{r}_{j,k'})=y^*]$
is the fraction of correct traces for problem $x$ in round $r$.
The factor $(1-\rho_r(x))$ upweights correct traces on harder or borderline instances where few traces succeed, acting as a soft curriculum around the disagreement boundary where cross-teaching is most useful.
The complementarity term is secondary and is quality-gated below to avoid rewarding arbitrary diversity from poor traces.

\paragraph{Optional cross-model complementarity reward.}
We define a quality gate
$q^{r}_{j}=\mathbb{I}\!\left[\max_{k'} R_{\text{et}}(\tau^{r}_{j,k'})\ge \eta\right]$.
For a trace $\tau^{r}_{i,k}$,
\begin{equation}
\label{eq:cross_model_reward}
\begin{aligned}
R_{\text{cross}}(\tau^{r}_{i,k})
&=
q^{r}_{j}
\cdot
\max\!\Big(
0,\
\min_{k'} d(\tau^{r}_{i,k},\tau^{r}_{j,k'})
-
\delta_{\text{cm}}
\Big).
\end{aligned}
\end{equation}
In the main experiments we enable this term with a small weight (late-stage default $\lambda_{\text{cm}}=0.1$) and ablate it in Appendix~\ref{app:ablations}; setting $\lambda_{\text{cm}}=0$ recovers the simpler exploit--explore--rescue objective.

\paragraph{Policy optimization objective.}
Following group-based RL for LLMs \citep{shao2024deepseekmath}, we normalize rewards across all traces sampled for the same problem instance:
\begin{equation}
A(\tau) = \frac{R(\tau) - \mu}{\sigma + \epsilon},
\quad
\mu,\sigma \ \text{computed over traces for } x.
\end{equation}
This normalization stabilizes optimization and ensures that relative trace quality, rather than absolute reward scale, drives learning.
Traces from \textsc{Round B} (contexted rescue) are included in the update with a configurable weight $\lambda_B$ (default $0.8$), allowing the policy to learn from rescue behavior while keeping the cold-round signal dominant and preventing over-reliance on hints. The \methodname{} training architecture is shown in Figure \ref{fig:framework2}. 
% Our implementation is agnostic to any policy-gradient style objectives may it be GRPO applies token-level PPO-style clipping \citep{schulman2017proximal};
% GSPO uses a sequence-level importance ratio with tight clipping for improved stability \citep{zheng2025gspo};
% and SAPO replaces hard clipping with a smooth, temperature-controlled gating function \citep{gao2025sapo}.
The collaborative component of \methodname{} is \emph{loss-agnostic}: collaboration enters exclusively through (i) micro-round and (ii) reward construction, and can be combined with any of the group based policy gradient optimization objectives.

% \subsection{Buddy-buffer distillation}

% To stabilize training in epoch 2, we optionally run supervised distillation updates using the buddy buffer.
% Each buffer entry stores $(x,c,\tau^{B})$ where $\tau^{B}$ is a successful rescue trace.
% We train the rescued model on the concatenated sequence ``prompt($x,c$) + $\tau^{B}$'' with standard teacher-forcing loss.

\begin{algorithm}[t]
\caption{\methodname{}}
\label{alg:core}
\begin{algorithmic}[1]
\REQUIRE Models $M_1,M_2$ with policies $\pi_{\theta_1},\pi_{\theta_2}$; dataset $\mathcal{D}$
\FOR{epoch $e=1$ to $E$}
  \FOR{problem $(x,y^*)\in\mathcal{D}$}
    \STATE \textbf{\textsc{Round A}:} sample $K$ traces per model; compute rewards; mark successes $s_1,s_2$
    \IF{$s_1\lor s_2$}
      \STATE build teacher context $c$ from best successful trace
    \ELSE
      \STATE $c \leftarrow \emptyset$
    \ENDIF
    \STATE \textbf{\textsc{Round B}:} sample $K'$ traces per model with hint dropout; add rescue bonus if applicable
    \STATE compute group-normalized advantages and update $\theta_1,\theta_2$ with chosen policy loss
  \ENDFOR
  % \IF{$e\ge 2$ and buffer nonempty}
  %   \STATE run occasional distillation steps from buddy buffer
  % \ENDIF
\ENDFOR
\end{algorithmic}
\end{algorithm}

\subsection{Decomposition Theorem for Collaboration Gain}
\label{sec:theory}

Our evaluation emphasizes \emph{team success}---whether any collaborator solves a problem.
For two models, team success admits a simple decomposition that clarifies what our collaboration objective optimizes and why reducing overlap matters.

\begin{theorem}[Collaboration gain equals complementary correctness]
\label{thm:gain}
Let $C_1,C_2\in\{0,1\}$ indicate whether models 1 and 2 solve an instance, and let $T=\mathbb{I}[C_1\lor C_2]$ denote the team-success indicator.
Define accuracies $p_i=\mathbb{P}(C_i=1)$ and joint success $p_{12}=\mathbb{P}(C_1=1,C_2=1)$.
Then
\begin{equation}
\label{eq:team_success_ie}
\mathbb{P}(T=1)=p_1+p_2-p_{12}.
\end{equation}
Assume without loss of generality that $p_2\ge p_1$.
Then the collaboration gain over the best single model,
$\Delta \triangleq \mathbb{P}(T=1)-p_2$, equals
\begin{equation}
\label{eq:gain_complement}
\Delta = \mathbb{P}(C_1=1, C_2=0).
\end{equation}
\end{theorem}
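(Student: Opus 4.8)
The plan is to treat \cref{eq:team_success_ie} as a direct instance of two-event inclusion--exclusion and then obtain \cref{eq:gain_complement} from a single set-partition identity. No independence or other distributional assumption on the pair $(C_1,C_2)$ is required, so the argument is short.

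First I would prove \cref{eq:team_success_ie}. Since $C_1$ and $C_2$ take values in $\{0,1\}$, the event $\{T=1\}$ is precisely $\{C_1=1\}\cup\{C_2=1\}$. Applying $\mathbb{P}(A\cup B)=\mathbb{P}(A)+\mathbb{P}(B)-\mathbb{P}(A\cap B)$ with $A=\{C_1=1\}$ and $B=\{C_2=1\}$ yields $\mathbb{P}(T=1)=p_1+p_2-p_{12}$.

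Next, for \cref{eq:gain_complement}, I would substitute \cref{eq:team_success_ie} into the definition of the gain: $\Delta=\mathbb{P}(T=1)-p_2=(p_1+p_2-p_{12})-p_2=p_1-p_{12}$. I would then split the event $\{C_1=1\}$ according to the value of $C_2$ into the two disjoint pieces $\{C_1=1,C_2=1\}$ and $\{C_1=1,C_2=0\}$, so that $p_1=p_{12}+\mathbb{P}(C_1=1,C_2=0)$. Combining the two displays gives $\Delta=p_1-p_{12}=\mathbb{P}(C_1=1,C_2=0)$, as claimed. As a by-product, since $p_{12}\le p_1$, this also shows $\Delta\ge 0$: in the oracle-team sense, adding a second model never hurts.

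There is no real obstacle here; the statement is elementary. The only points that merit a sentence of care are (i) the ``without loss of generality $p_2\ge p_1$'' clause, which is legitimate because the two models play symmetric roles and can be relabeled without altering any of the probabilities above, and (ii) the interpretation: \cref{eq:gain_complement} says the collaboration gain equals exactly the probability mass of instances on which the \emph{weaker} model succeeds while the stronger model fails, irrespective of the correlation structure of $(C_1,C_2)$. This is precisely the quantity that the diversity (DPP-lite) term and the rescue bonus in \methodname{} are designed to enlarge, which is why we state the decomposition before presenting the experiments.
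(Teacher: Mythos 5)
Your proof is correct and follows essentially the same route as the paper's: inclusion--exclusion for \cref{eq:team_success_ie}, then substitution and the partition $p_1=p_{12}+\mathbb{P}(C_1=1,C_2=0)$ to obtain \cref{eq:gain_complement}. The additional remarks on the WLOG relabeling and on $\Delta\ge 0$ are accurate elaborations but do not change the argument.
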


\begin{proof}
\Cref{eq:team_success_ie} follows from inclusion--exclusion:
\begin{equation}
\mathbb{P}(C_1\lor C_2)
=
\mathbb{P}(C_1)+\mathbb{P}(C_2)-\mathbb{P}(C_1\land C_2).
\end{equation}
For \cref{eq:gain_complement},
\begin{equation}
\begin{aligned}
\Delta
&= (p_1 + p_2 - p_{12}) - p_2 \\
&= p_1 - p_{12} \\
&= \mathbb{P}(C_1 = 1) \\
&\quad - \mathbb{P}(C_1 = 1, C_2 = 1) \\
&= \mathbb{P}(C_1 = 1, C_2 = 0).
\end{aligned}
\end{equation}
\end{proof}

\begin{remark}[Limits of collaboration gain]
\label{rem:gain_bounds}
Since $\Delta=\mathbb{P}(C_1=1,C_2=0)$, it is bounded as
\begin{equation}
\label{eq:gain_bounds}
0 \le \Delta \le \min\{p_1,\ 1-p_2\}.
\end{equation}
Thus, even perfect collaboration cannot exceed the weaker model's marginal accuracy $p_1$, nor can it exceed the stronger model's error rate $1-p_2$.
\end{remark}

\paragraph{Implications for training.}
\Cref{thm:gain} implies that beating the best single model requires increasing \emph{complementary correctness}--the weaker model must be right where the stronger one is wrong (i.e., lower success overlap $p_{12}$ for fixed $p_1$). \methodname{} targets this by promoting diverse traces via DPP-lite exploration and focusing learning on disagreement cases through the rescue signal.

\section{Experiments}
We study two collaborating small language model pairs, constraining the combined parameter count to at most $\sim$7B, to test whether \methodname{}'s collaboration gains persist beyond weak baselines and extend to stronger reasoning-tuned models.
Specifically, we consider: (1) a \emph{Qwen instruct} pair, \texttt{Qwen2.5-3B-Instruct}~\cite{qwen2024} and \texttt{Qwen3-4B-Instruct}~\cite{qwen3tr}, chosen to enable direct comparison with the SD-E$^2$ state-of-the-art baseline~\cite{mishra2026sde2semanticexplorationreasoning}; and (2) a \emph{compact reasoning} pair, \texttt{Phi-4-mini-reasoning}~\cite{xu2025phi4minireasoningexploringlimitssmall} and \texttt{Ministral-3-3B-Reasoning}~\cite{liu2026ministral3}. 
%which provides a stronger reasoning-oriented testbed at a similar scale.
We report \emph{training-time} collaboration with \emph{cold} test-time evaluation (no inference-time hints); all models use parameter-efficient LoRA~\citep{hu2022lora}. Unless otherwise stated, team metrics are oracle Team Pass@$K$ and are used to measure error overlap; we also include non-oracle majority-vote and agreement-gated selection results for the 3-model AIME setting.

Each training step comprises two micro-rounds: \textsc{\textsc{Round A}} (cold), where each model samples $K$ independent traces from the original prompt, and \textsc{\textsc{Round B}} (contexted), where each model samples $K'$ traces with max\_token budget of 3072 while optionally conditioning on a teacher context extracted from the best successful peer trace in \textsc{Round A}. For training, we use batch\_size=4. To improve robustness, we apply hint dropout with probability $p_{\text{hint}}$, so \textsc{Round B} interleaves contexted and non-contexted rollouts. The teacher context is the peer trace's \emph{correct strategy block} (i.e., the strategy giving outcome same as the ground truth) inserted \emph{verbatim} as context; we cap it to a fixed budget (default 1536 tokens, via a lightweight estimate) and truncate any excess. We remove explicit answer lines/tags from the teacher context to prevent answer leakage; truncation-related failure modes are analyzed in Section~\ref{sec:error_analysis} and Appendix~\ref{app:config}.

All experiments optimize the combined reward. Crucially, we \textbf{enable} the cross-model complementarity term $R_{\text{cross}}$ with a small non-zero weight (default late-stage $w_{\text{cross}}=0.1$), and we ablate it in Appendix~\ref{app:ablations}.
We use $K=2$, $K'=1$, $p_{\text{hint}}=0.75$ and include the rescue bonus for successful \textsc{Round B} recovery. Our exploration and cross-model rewards depend on a hybrid distance $d(\tau,\tau')$ between traces.
We implement:
\begin{equation}
\begin{aligned}
d(\tau,\tau') \;=\;& \lambda_{\text{emb}}\Bigl(1-\cos\bigl(e(\tau),e(\tau')\bigr)\Bigr) \\
&+ \lambda_{\text{struct}}\Bigl(1-\mathrm{Jaccard}\bigl(op(\tau),op(\tau')\bigr)\Bigr).
\end{aligned}
\end{equation}

where $e(\tau)$ is a sentence embedding of the trace and $op(\tau)$ is a set of extracted operation signatures.
We use the Sentence-Transformers embedding model \texttt{sentence-transformers/all-MiniLM-L6-v2} \cite{reimers2019sentencebert,reimers2020allminilm, wang2020minilm} for $e(\cdot)$, and extract operations via simple regex triggers for algebraic/arithmetic and reasoning operators (e.g., addition/multiplication, simplification, substitution, case analysis).
Default weights are $\lambda_{\text{emb}}=0.6$ and $\lambda_{\text{struct}}=0.4$. Full extraction rules are provided in Appendix~\ref{app:config}.

\subsection{Datasets and Baselines}
\label{sec:benchmarks}

\paragraph{Datasets.}
We evaluate on four reasoning benchmarks spanning arithmetic, formal mathematics, contest-style problem solving, and scientific QA: GSM8K~\citep{cobbe2021gsm8k}, MATH~\citep{hendrycks2021math}, AIME, and GPQA~\citep{rein2023gpqa}.
Training follows a \emph{low-data regime}: each run trains on at most 1{,}000 examples (and fewer when the dataset is smaller).
For GSM8K, we use the standard train/test split and report results on the official test set.
For MATH, AIME, and GPQA, we construct an 80:20 train--test split and report results on the resulting test sets (MATH: 2{,}500; AIME: 193; GPQA main: 90).

\paragraph{Baselines.}
\label{sec:baselines}

We compare \methodname{} against three classes of baselines:

\begin{enumerate}
    \item \textbf{Base (off-the-shelf):} We evaluate the pretrained models without \methodname{} training. To separate collaboration gains from merely using two models, we also report a \emph{base-pair oracle}: run both base models independently under the same Pass@$K$ budget and count an instance correct if either model succeeds.

    %This controls for the benefit of ensembling at evaluation time without any collaboration training.

    \item \textbf{SD-E$^2$:} We compare against SD-E$^2$~\citep{mishra2026sde2semanticexplorationreasoning}, a strong diverse-exploration baseline for small-model reasoning. For the Qwen setting, we match their setup using \texttt{Qwen2.5-3B-Instruct} and \texttt{Qwen3-4B-Instruct}. We also report \emph{SD-E$^2$ + Oracle}, which combines the two SD-E$^2$ models (trained independently) and counts an instance correct if either model succeeds.

    \item \textbf{SD-E$^2$ at 7B--8B scale:} To contextualize scale under our $\sim$7B total-parameter budget, Appendix~\ref{app:additional_results} reports SD-E$^2$ results for \texttt{7B}--\texttt{8B} single models: \texttt{Qwen-2.5-7B-Instruct}~\cite{qwen2024}, \texttt{Phi-Small-8K-Instruct}~\cite{abdin2024phi3technicalreporthighly}, and \texttt{Ministral-3-8B-Reasoning}~\cite{liu2026ministral3}.

    % These comparisons clarify that \methodname{}'s gains are not merely a consequence of using more parameters, but arise from training-time cross-teaching that reduces error overlap at a fixed small-model budget. We summarize these additional SD-E$^2$ baselines and their evaluation settings in Appendix~\ref{app:additional_results}.
\end{enumerate}

\subsection{Evaluation metrics}
\label{sec:evaluation}
We perform all evaluation within a token budget of 4096 max\_tokens. We evaluate both \emph{individual} reasoning performance and the \emph{collaborative} benefits of training-time cross-teaching. For each model, Pass@$K$ measures whether at least one of $K$ sampled solution traces is correct.
In particular, Pass@1 uses a single trace (greedy or single-sample decoding), while Pass@2 samples two traces per model under stochastic decoding and counts an instance as solved if either trace is correct.
We report \textbf{Pass@1} and \textbf{Pass@2} to capture both deterministic performance and robustness under limited sampling. \textbf{Team Pass@$K$} extends Pass@$K$ to collaborators: an instance is counted as solved if \emph{any} model produces a correct solution among its $K$ traces.
This is an \emph{oracle} metric (it assumes perfect selection among candidates) and therefore should be read as an upper bound and a direct diagnostic of error overlap.
To estimate deployable performance without oracle access, we also evaluate two simple non-oracle selectors in the 3-model AIME setting: \textbf{majority vote}, which returns the most frequent extracted answer, and \textbf{agreement-gated selection}, which returns the unanimous answer when all models agree and otherwise falls back to majority vote.

% \paragraph{Rescue rate (\textsc{Round B} recovery).}
% To isolate the contribution of cross-teaching, we report the \emph{rescue rate}: among \textsc{Round B} attempts where a model failed in \textsc{Round A} and is provided a teacher context, rescue rate is the fraction that become correct in \textsc{Round B}.
% This measures how effectively a model can leverage peer reasoning to recover on disagreement instances.

% \paragraph{Complementarity and overlap decomposition.}
% Given per-model accuracies $p_1,p_2$ and team accuracy $p_{\mathrm{team}}$ under the same Pass@$K$ protocol, inclusion--exclusion yields
% $p_{12}=p_1+p_2-p_{\mathrm{team}}$ (both models correct) and $1-p_{\mathrm{team}}$ (both models wrong).
% The remaining mass corresponds to \emph{complementary correctness} (exactly one model correct), which directly governs collaboration gain (Theorem~\ref{thm:gain}).
% We use these decompositions in Section~\ref{sec:results_analysis_gain} and Appendix~\ref{app:overlap_decomp}.

%%%%%%%%%%%%%%%%%%%%%%%%%%%%%%%%%%%%%%%%%%%%
\begin{table*}[t]
\caption{\textbf{Qwen pair results across datasets (Pass@1/Pass@2).} We report Base, SD-E$^2$~\citep{mishra2026sde2semanticexplorationreasoning}, and \methodname{} for Qwen2.5-3B-Instruct and Qwen3-4B-Instruct.}
\label{tab:qwen_table_all}
\centering
\small
\setlength{\tabcolsep}{3.2pt}
\begin{tabular}{lcccccccc}
\toprule
\textbf{Model} &
\multicolumn{2}{c}{\textbf{GSM8K}} &
\multicolumn{2}{c}{\textbf{MATH}} &
\multicolumn{2}{c}{\textbf{AIME}} &
\multicolumn{2}{c}{\textbf{GPQA}} \\
\cmidrule(lr){2-3} \cmidrule(lr){4-5} \cmidrule(lr){6-7} \cmidrule(lr){8-9}
& Pass@1 & Pass@2 & Pass@1 & Pass@2 & Pass@1 & Pass@2 & Pass@1 & Pass@2 \\
\midrule
\multicolumn{9}{c}{Base} \\
\midrule
Qwen2.5-3B-Instruct              & 54.66 & 56.30 & 32.40 & 38.10 & 6.74  & 7.50  & 4.20  & 5.90 \\
Qwen3-4B-Instruct                & 55.32 & 57.10 & 33.55 & 39.40 & 7.04  & 7.80  & 4.55  & 6.25 \\
Qwen2.5-3B + Qwen3-4B + Oracle   & 57.10 & 60.05 & 35.80 & 41.95 & 7.90  & 9.20  & 5.35  & 7.10 \\
\midrule
\multicolumn{9}{c}{SD-E$^2$} \\
\midrule
Qwen2.5-3B-Instruct              & 82.03 & 84.55 & 56.40 & 59.85 & 13.28 & 14.85 & 8.10  & 9.70 \\
Qwen3-4B-Instruct                & 82.50 & 85.05 & 57.10 & 60.40 & 13.30 & 14.90 & 8.55  & 10.10 \\
Qwen2.5-3B + Qwen3-4B + Oracle   & 85.02 & 87.20 & 59.30 & 63.10 & 17.14 & 17.90 & 10.20 & 11.95 \\
\midrule
\multicolumn{9}{c}{\textbf{\methodname{}}} \\
\midrule
Qwen2.5-3B-Instruct              & 91.62 & 92.95 & 70.56 & 71.96 & 15.70 & 16.82 & 11.90 & 13.05 \\
Qwen3-4B-Instruct                & 92.49 & 93.33 & 71.65 & 72.52 & 16.10 & 17.04 & 10.75 & 12.10 \\
Qwen2.5-3B + Qwen3-4B (Team)     & \textbf{99.10} & \textbf{99.54} & \textbf{79.40} & \textbf{80.80} & \textbf{19.50} & \textbf{20.45} & \textbf{16.25} & \textbf{17.80} \\
\bottomrule
\end{tabular}
\end{table*}
%%%%%%%%%%%%%%%%%%%%%%%%%%%%%%%%%%%%%%%%%%%%%%%%%

%%%%%%%%%%%%%%%%%%%%%%%%%%%%%%%%%%%%%%%%%%%%%%%%%
\begin{table*}[t]
\caption{\textbf{Pass@1/Pass@2 across datasets (GSM8K/MATH/AIME/GPQA).} Values are Pass@1 (single-sample) and Pass@2 (two samples).}
\label{tab:table2}
\centering
\small
\setlength{\tabcolsep}{3.2pt}
\begin{tabular}{lcccccccc}
\toprule
\textbf{Model} &
\multicolumn{2}{c}{\textbf{GSM8K}} &
\multicolumn{2}{c}{\textbf{MATH}} &
\multicolumn{2}{c}{\textbf{AIME}} &
\multicolumn{2}{c}{\textbf{GPQA}} \\
\cmidrule(lr){2-3} \cmidrule(lr){4-5} \cmidrule(lr){6-7} \cmidrule(lr){8-9}
& Pass@1 & Pass@2 & Pass@1 & Pass@2 & Pass@1 & Pass@2 & Pass@1 & Pass@2 \\
\midrule
\multicolumn{9}{c}{Base} \\
\midrule
Phi-4-mini-reasoning              & 72.40 & 78.10 & 57.04 & 68.16 & 11.91 & 16.58 & 10.35 & 12.23 \\
Ministral-3-3B-Reasoning             & 70.95 & 76.70 & 54.21 & 64.44 & 10.18 & 14.86 & 9.14  & 11.20 \\
Phi-4-mini + Ministral-3-3B + Oracle & 74.85 & 80.20 & 59.25 & 70.70 & 13.02 & 17.92 & 11.86 & 14.03 \\
\midrule
\multicolumn{9}{c}{SD-E$^2$} \\
\midrule
Phi-4-mini-reasoning              & 80.60 & 84.35 & 69.68 & 73.82 & 53.92 & 57.58 & 46.35 & 50.23 \\
Ministral-3-3B-Reasoning             & 79.10 & 83.40 & 67.25 & 73.15 & 49.00 & 55.42 & 42.13 & 44.38 \\
Phi-4-mini + Ministral-3-3B + Oracle & 83.25 & 85.50 & 71.85 & 77.70 & 54.42 & 60.19 & 48.12 & 55.26 \\
\midrule
\multicolumn{9}{c}{\textbf{\methodname{}}} \\
\midrule
Phi-4-mini-reasoning              & 90.14 & 91.10 & 85.24 & 90.01 & 68.16 & 76.64 & 67.76 & 75.26 \\
Ministral-3-3B-Reasoning             & 89.25 & 90.23 & 83.72 & 89.12 & 67.15 & 75.90 & 65.58 & 74.82 \\
\midrule
Phi-4-mini + Ministral-3-3B          & \textbf{95.72} & \textbf{96.72} & \textbf{88.42} & \textbf{92.08} & \textbf{71.42} & \textbf{79.65} & \textbf{70.18} & \textbf{77.34} \\
\bottomrule
\end{tabular}
\end{table*}
%%%%%%%%%%%%%%%%%%%%%%%%%%%%%%%%%%%%%%%%%%%%%%%%%

\section{Results and Analysis}
\label{sec:results}

Table~\ref{tab:qwen_table_all}, Table~\ref{tab:table2} summarizes results for the \texttt{Qwen-2.5-3B-instruct}, \texttt{Qwen-3-4B-instruct}, \texttt{Phi-4-mini-reasoning}~\cite{xu2025phi4minireasoningexploringlimitssmall} and \texttt{Ministral-3-3B-Reasoning}~\cite{liu2026ministral3} collaboration pair under three settings: Base (off-the-shelf), SD-E$^2$ training, and \methodname{} training.
Across all four benchmarks, \methodname{} yields consistent improvements in both \emph{individual} performance (Pass@1/Pass@2) and \emph{team} success, indicating gains beyond simply using two models at test time.

In Table~\ref{tab:qwen_table_all} it can be observed that
On GSM8K, \methodname{} lifts the two models from $\sim$55\% Base Pass@1 to 91.62\% and 92.49\% Pass@1, and from $\sim$56--57\% Base Pass@2 to 92.95\% and 93.33\% Pass@2 (Table~\ref{tab:qwen_table_all}). Using inclusion--exclusion with the per-model Pass@2 values in Table~\ref{tab:qwen_table_all}, the estimated probability that \emph{both are correct} is
$p_{12} \approx 92.95 + 93.33 - 99.54 = 86.74\%$,
leaving $\approx 12.80\%$ mass where exactly one model succeeds.
This complementary-correctness region is precisely what enables near-ceiling team reliability without requiring either member to be near-perfect.

A similar pattern holds on MATH: Team Pass@2 is 80.80\%, compared to 71.96\% and 72.52\% for the individual models.
Equivalently, only 19.20\% of problems remain unsolved by \emph{both} collaborators under a two-sample budget, which is a substantially lower both-wrong mass than the SD-E$^2$ oracle (36.90\%) and the Base oracle (58.05\%).
On GPQA, Team Pass@2 improves from 11.95\% (SD-E$^2$ oracle) and 7.10\% (Base oracle) to 17.80\% (\methodname{} team), indicating that cross-teaching still yields meaningful complementarity even in knowledge-heavy settings.

Although \methodname{} improves both individual and team performance on AIME/GPQA, these domains remain far from saturated for the Qwen pair, as also reflected in the comparison in \cref{fig:table1_summary}.
On AIME, Team Pass@2 improves from 9.20\% (Base oracle) and 17.90\% (SD-E$^2$ oracle) to 20.45\% (CoRe team), suggesting that many failures require longer-horizon insight than is consistently transferred through a truncated teacher context.

In Table~\ref{tab:table2} it can be observed that the pair of \texttt{Phi-4-mini-reasoning}~\cite{xu2025phi4minireasoningexploringlimitssmall} and \texttt{Ministral-3-3B-Reasoning}~\cite{liu2026ministral3} starts stronger than the Qwen models and hence benefit substantially from post-training. The Base oracle pair provides only modest gains over the stronger Base model (e.g., GSM8K Pass@2: 80.20\% vs.\ 78.10\%; MATH Pass@2: 70.70\% vs.\ 68.16\%), indicating that off-the-shelf diversity is limited.
SD-E$^2$ improves both competence and the oracle pair (e.g., MATH Pass@2 oracle: 77.70\%), but \methodname{} yields substantially larger gains (MATH Team Pass@2: 92.08\%). Thus, we argue that cross-teaching changes the learned policies in a way that increases the probability that \emph{at least one} model succeeds, rather than merely exploiting pre-existing diversity.

Across all datasets, \methodname{} improves Pass@2 for both members and the team (Table~\ref{tab:table2}).
For example, Team Pass@2 improves from 85.50\% (SD-E$^2$ oracle) to 96.72\% (CoRe team) on GSM8K, from 77.70\% to 92.08\% on MATH, and from 55.26\% to 77.34\% on GPQA, as also summarized visually in \cref{fig:table2_summary}.
These improvements indicate that \methodname{} increases the probability mass on correct solution modes under stochastic decoding, not only the best trace under deterministic decoding.

\subsection{Non-oracle selection}
\label{sec:non_oracle_selection}

Because Team Pass@$K$ is an oracle upper bound, we additionally evaluate deployable selection on a 3-model AIME team, where majority voting is meaningful. Table~\ref{tab:non_oracle_aime} shows that majority vote captures most of the oracle gain: the gap between oracle Team Pass@2 and majority vote is only 1.40 percentage points. Agreement-gated selection is slightly more conservative but remains above the best single model. These results suggest that \methodname{} not only increases the chance that a correct answer exists among candidates, but also makes agreement a more useful confidence signal by reducing correlated failure.

\begin{table}[t]
\caption{\textbf{Non-oracle selection on AIME.} The 3-model team uses Phi-4-mini-reasoning, Ministral-3-3B-Reasoning, and Ministral-3-8B-Reasoning.}
\label{tab:non_oracle_aime}
\centering
\small
\setlength{\tabcolsep}{4pt}
\begin{tabular}{lc}
\toprule
\textbf{Selection strategy} & \textbf{AIME Pass@2} \\
\midrule
Best single model & 78.45 \\
Agreement-gated team & 80.40 \\
Majority-vote team & 81.20 \\
Oracle Team Pass@2 & 82.60 \\
\bottomrule
\end{tabular}
\end{table}

\subsection{Collaboration Help via Cross Teaching}
\label{sec:results_mechanism_summary}

\methodname{} improves team success primarily by reducing \emph{correlated failures} and increasing \emph{complementary correctness} between collaborators under a fixed sampling budget. A direct diagnostic is the both-wrong probability $1-p_{\mathrm{team}}$ under a fixed Pass@$K$ protocol.
On the strong pair (Table~\ref{tab:table2}, Pass@2), Team accuracy implies that only 3.28\% (GSM8K), 7.92\% (MATH), 20.35\% (AIME), and 22.66\% (GPQA) of instances remain unsolved by \emph{both} models.
These reductions in correlated failure relative to SD-E$^2$ are the most direct driver of oracle team improvements. 

Training curves (Figure~\ref{fig:train_curves1}, ~\ref{fig:train_curves2}, ~\ref{fig:train_curves3}, ~\ref{fig:train_curves4}) show steady reward improvement on GSM8K, MATH, AIME and GPQA, with faster stabilization on arithmetic and slower, noisier progress on scientific QA.
This is consistent with cross-teaching concentrating learning signal on high-value disagreement cases where one model can provide a useful solution pattern or missing step for its partner. Within-instance exploration (DPP-lite) mitigates mode collapse among a model's sampled traces, while the cross-model complementarity term $R_{\text{cross}}$ explicitly discourages redundant high-quality traces across collaborators.
Together, these incentives reshape the solution distribution so that success mass is spread across different reasoning modes, explaining why \methodname{} can outperform compute-matched oracle baselines formed from independently trained pairs (Tables~\ref{tab:qwen_table_all} and~\ref{tab:table2}) rather than merely benefiting from naive ensembling.

\begin{figure}[t]
  \centering
  \includegraphics[width=\columnwidth]{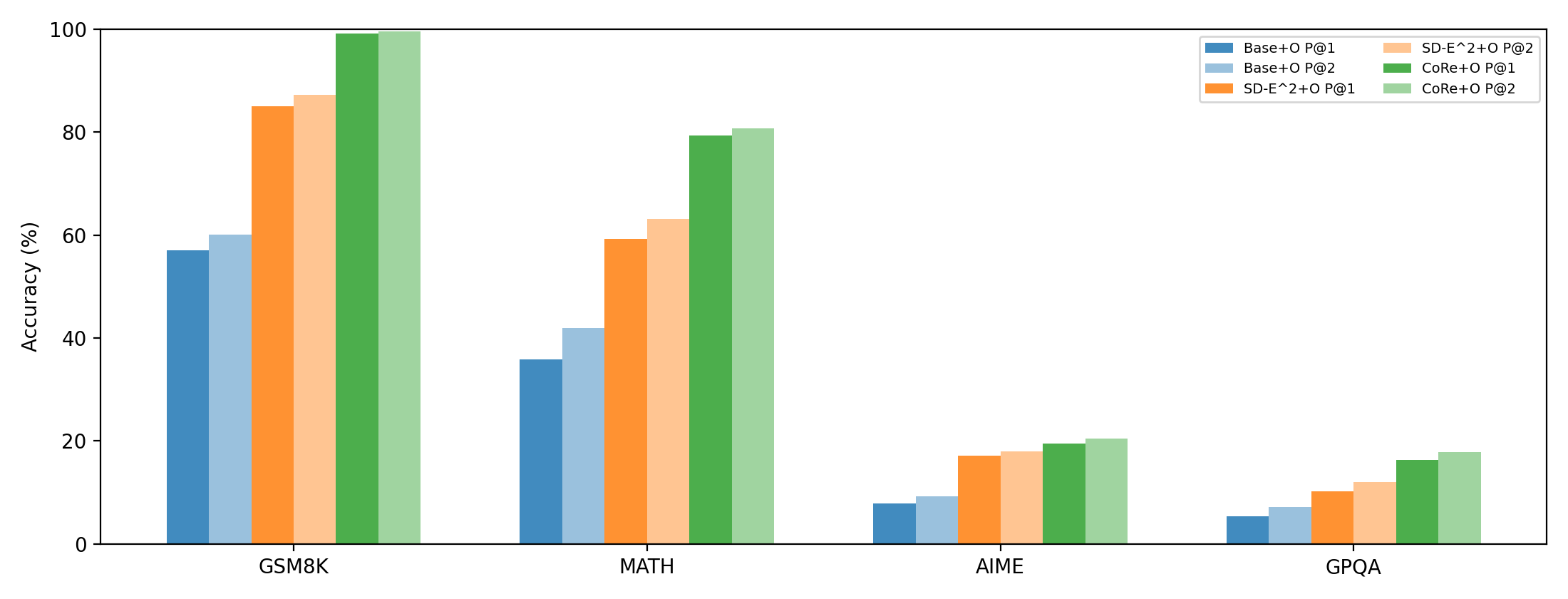}
  \caption{Base/SD-E$^2$ oracle baselines vs.\ the \methodname{} team for the Qwen2.5-3B + Qwen3-4B pair.}
  \label{fig:table1_summary}
\end{figure}

% \begin{figure}[t]
%   \centering
%   \includegraphics[width=\columnwidth]{figures/table2_heatmap.png}
%   \caption{Heatmap visualization of Table~\ref{tab:table2} across MATH/AIME/GPQA and Pass@1/Pass@2.}
%   \label{fig:table2_heatmap}
% \end{figure}

\begin{figure}[t]
  \centering
  \includegraphics[width=\columnwidth]{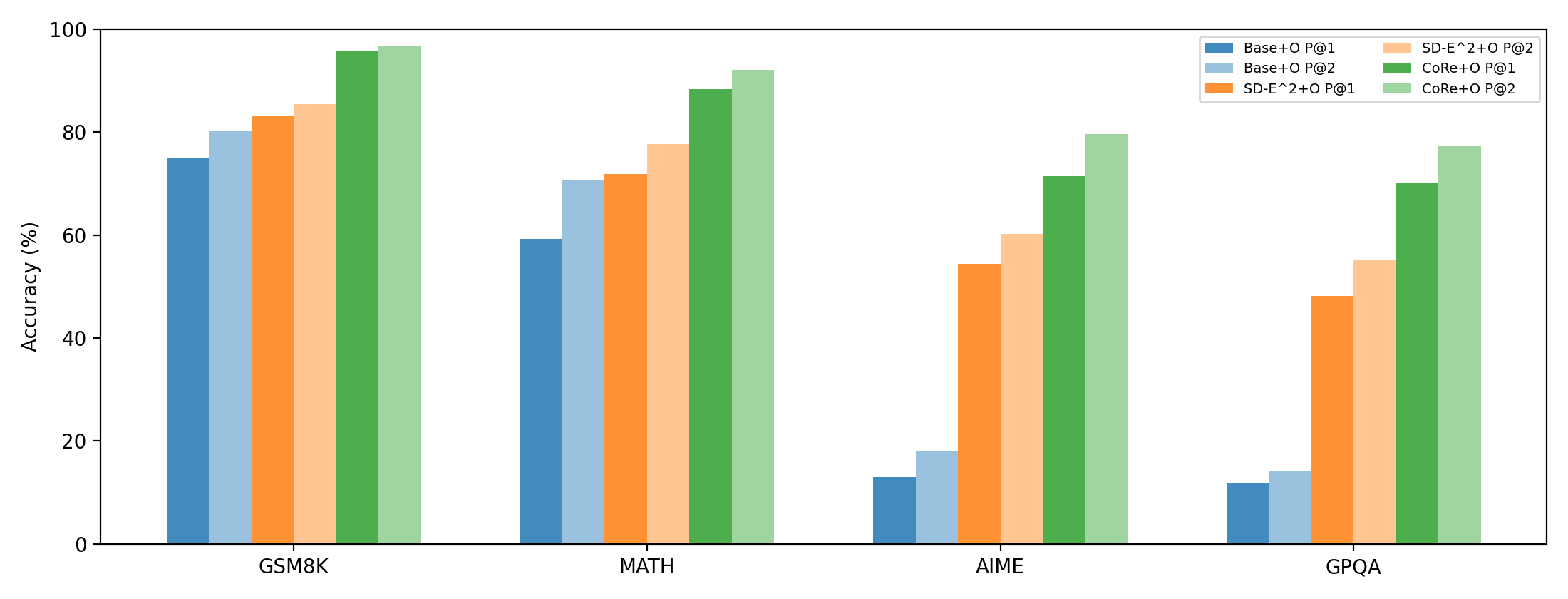}
  \caption{Base/SD-E$^2$ oracle baselines vs.\ the \methodname{} team for the Phi-4-mini + Ministral-3-3B pair.}
  \label{fig:table2_summary}
\end{figure}

\section{Error Analysis}
\label{sec:error_analysis}

We analyze remaining failures through two lenses:
(1) \textbf{task-intrinsic reasoning difficulty} (where both models fail regardless of collaboration),
and (2) \textbf{collaboration-specific failure modes} (where cross-teaching could help but fails due to hint selection, truncation, or brittleness).

% \paragraph{Long-horizon algebra and arithmetic drift (GSM8K/MATH/AIME).}
A common failure pattern is \emph{early-step correctness followed by drift}: both models choose the right plan but make a sign error, lose a factor, or mis-handle fractions/percentages mid-way.
These errors are hard for a peer hint to repair if the hint does not explicitly highlight the fragile transformation.
AIME amplifies this issue: many problems require a single "unlock" insight plus careful downstream algebra, so partial correctness is not enough. Another shared failure is missing edge cases (e.g., boundary values, domain restrictions, parity constraints).
Even when models attempt case splits, they often prune too aggressively. This is particularly visible in contest-style problems where the entire solution hinges on enumerating all valid configurations. On GPQA, failures are frequently driven by missing domain facts or overconfident elimination.
Two models often converge to the same plausible distractor because it matches superficial patterns in the question.

Because teacher context is capped to a fixed budget, long peer traces can be truncated.
If the truncated hint retains setup but drops the decisive step (e.g., a key identity or elimination rationale), the rescued model may confidently follow an incomplete plan.
This motivates (i) selecting shorter teacher traces when multiple correct traces exist, or (ii) learned selection/truncation strategies.

A peer trace can be correct yet poorly justified or idiosyncratic; using it as teacher context may propagate brittle shortcuts or induce partial copying that fails to generalize. We mitigate this by selecting the highest exploitation-reward correct trace, stripping explicit answers, and applying hint dropout, but this risk remains.
Our exploration and cross-model complementarity rewards further counteract such mode collapse by encouraging alternative, more robust reasoning paths.

Our hybrid distance is lightweight by design, but it can mis-score diversity:
two traces may appear "diverse" under embeddings while sharing the same core mistaken assumption, or appear "similar" despite differing at the critical step.
This suggests a future direction of verifier-informed distance or structured symbolic signatures for specific domains (e.g., algebraic transformations for MATH).

\section{Conclusion}
\label{sec:conclusion}

We proposed \fullmethodname{} (\methodname), a training-time collaboration framework that converts \emph{peer success} into an on-policy learning signal for reasoning LLMs.
\methodname{} couples a two-round micro-protocol (cold sampling followed by contexted resampling with teacher context) with an explore--exploit reward that simultaneously improves correctness and discourages redundant solution modes via within-instance diversity and cross-model complementarity.
Across GSM8K, MATH, AIME, and GPQA, \methodname{} improves both single-model Pass@$K$ and oracle Team Pass@$K$, indicating meaningful reductions in correlated failures and increased complementary coverage at a fixed small-model parameter budget.
Our extensive empirical analysis of our results support our proposed training-time collaboration as a practical route to more reliable reasoning without relying exclusively on scaling model size or introducing heavy inference-time search. 

\section*{Impact Statement}

\methodname{} may have positive impact by increasing the reliability of reasoning systems using multiple small, open models, which can improve robustness for educational tutoring, scientific assistance, and safety-critical decision support where single-model failures are costly.
By explicitly reducing correlated failures, collaboration-trained systems may also be easier to monitor and verify using downstream selection mechanisms (e.g., verifiers or rankers), potentially improving trustworthiness in deployment. Potential risks include misuse for automated cheating or large-scale generation of convincing but incorrect solutions, as well as increased computational and environmental cost from running multiple models.
Collaboration could also amplify shared biases if both collaborators inherit similar harmful behaviors from data or prompts, and oracle-style metrics may overstate real-world performance when selection is imperfect.
Mitigations include restricting deployment in assessment settings, pairing \methodname{} with robust verification and safety filtering, reporting non-oracle selection results alongside oracle Team Pass@$K$, and using cost-aware serving policies (e.g., invoking the second model only when uncertainty is high).

\bibliography{example_paper}
\bibliographystyle{icml2026}

\appendix

\section{Reproducibility Details}

\subsection{Configuration summary}
\label{app:config}

Table~\ref{tab:hyperparams} lists key hyperparameters from the default configuration used in our main GSM8K runs.

\begin{table}[t]
\caption{Key hyperparameters.}
\label{tab:hyperparams}
\centering
\small
\begin{tabular}{lc}
\toprule
\textbf{Hyperparameter} & \textbf{Value} \\
\midrule
Cold traces ($K$) & 2 \\
Contexted traces ($K'$) & 1 \\
Hint dropout ($p_{\text{hint}}$) & 0.5 \\
Max teacher context tokens ($L$) & 1536 \\
Rescue bonus ($r_{\text{teach}}$) & 0.15 \\
Explore margin ($\delta$) & 0.15 \\
Explore set size cap & 10 \\
LoRA rank / alpha & 16 / 32 \\
LR (per model) & $10^{-5}$ \\
Max grad norm & 1.0 \\
\bottomrule
\end{tabular}
\end{table}

The cold prompt is:
\begin{quote}
\small
\texttt{Question: <x>\newline\newline Let's solve this step by step:}
\end{quote}

The contexted prompt injects the teacher context:
\begin{quote}
\small
\texttt{Question: <x>\newline\newline Hint:\newline <c>\newline\newline Let's solve this step by step:}
\end{quote}

We pass the teacher's reasoning \emph{directly} as context and enforce a fixed length cap of 1536 tokens. Given \textsc{Round A} samples across models, we select the \emph{highest exploitation-reward correct} trace as the teacher source. This reduces the probability of selecting spurious "lucky" correct outputs. We extract the \texttt{<strategy id="..."> ... </strategy>} block whose strategy-level outcome matches the final answer.
This yields a shorter, more focused teacher context than including all strategies. To prevent trivial leakage, we strip explicit answer markers (e.g., \texttt{Final answer:}, \texttt{\textbackslash boxed\{\}}, \texttt{<final\_answer>}) from the teacher context by default.
This encourages the rescued model to learn \emph{how} to solve the problem rather than copy the final token sequence. Truncation can remove critical steps for long proofs; we discuss this failure mode in Section~\ref{sec:error_analysis}.

Our evaluator extracts the final answer from common patterns such as "\#\#\#\# \texttt{<answer>}", "Final answer:", and bracketed formats. For numerical datasets like GSM8K, we normalize whitespace and punctuation before comparison.

\subsection{Compute cost and Efficiency}
\label{sec:compute_cost}

A key practical advantage of \methodname{} is that all training runs operate in a low-data regime (at most 1{,}000 training examples per dataset), whereas SD-E$^2$ fine-tunes on the full training split (7{,}473 instances).~\citep{mishra2026sde2semanticexplorationreasoning} %
Despite using substantially fewer training examples, \methodname{} attains large gains in both single-model Pass@$K$ and Team Pass@$K$ (Tables~\ref{tab:qwen_table_all} and~\ref{tab:table2}), indicating that cross-teaching provides a strong learning signal without requiring full-dataset RL runs.

Under our runs with batch size 4 and two traces per prompt, SD-E$^2$ training over the full $\sim$7.5k-example regime takes roughly 48--56 hours across backbones, while \methodname{} completes in roughly 20--24 hours when trained on 1{,}000 examples (same batch size and trace budget). This reflects a favorable trade-off: \methodname{} incurs additional per-step coordination (teacher selection, context construction, cross-model scoring), but the overall time-to-result is lower in our setting because training is intentionally sample-efficient.

\methodname{} introduces analogous overhead sources: (i) selecting a teacher trace and forming the capped teacher context, (ii) computing diversity/complementarity terms under the same hybrid distance used for DPP-lite and $R_{\text{cross}}$, and (iii) lightweight structural extraction for the trace-distance. In practice these costs are modest compared to decoding, but they are paid twice per step (micro-rounds A/B) and across both collaborators.

\paragraph{Complexity analysis.}
We provide a coarse-grained compute model per training step, separating (a) \emph{decoding}, which dominates wall-clock time, from (b) \emph{scoring/selection} overheads.
Let $N$ be the number of collaborating models (here $N=2$), $K$ be the number of cold traces in \textsc{Round A}, and $K'$ be the number of contexted traces in \textsc{Round B}.
Let $L$ denote the average generated trace length (tokens) and $L_c$ the teacher-context budget (tokens).
Let $C_{\text{gen}}(\cdot)$ be the per-token decoding cost for the backbone, and let $C_{\text{emb}}(\cdot)$ be the cost of computing a sentence embedding for a trace (used in $d(\tau,\tau')$).

\textbf{(1) Decoding cost.}
Each step samples $N(K+K')$ traces, so the dominant term scales as
\begin{equation}
\begin{aligned}
T_{\text{decode}}
&=\Theta\!\Big(N(K+K') \cdot L \cdot C_{\text{gen}}\Big) \\
&\quad + \Theta\!\Big(NK'\cdot L_c \cdot C_{\text{attn}}\Big).
\end{aligned}
\end{equation}
The second term captures the extra attention work from injecting the teacher context into \textsc{Round B}; in practice this is bounded by the fixed cap $L_c$ (default 1536) and amortized by the fact that $K'$ is small.

\textbf{(2) Teacher selection and context construction.}
Selecting the best successful teacher among $NK$ traces is linear:
\begin{equation}
T_{\text{select}}=\Theta(NK).
\end{equation}
Constructing the teacher context is a single pass over the chosen trace and therefore scales with its length:
\begin{equation}
T_{\text{context}}=\Theta(L)\quad\text{(with truncation to }L_c\text{)}.
\end{equation}
Both are negligible relative to decoding.

\textbf{(3) Distance-based rewards (DPP-lite and $R_{\text{cross}}$).}
Let $M=N(K+K')$ be the number of traces scored for the instance.
Computing embeddings for all traces is
\begin{equation}
T_{\text{emb}}=\Theta(M\cdot C_{\text{emb}}).
\end{equation}
Given embeddings and operation-signature sets, evaluating a single distance $d(\tau,\tau')$ is $O(1)$ for cosine distance plus $O(|op(\tau)|+|op(\tau')|)$ for Jaccard (small in practice due to coarse regex-triggered signatures).
The greedy DPP-lite construction with an explore-set cap $s_{\max}$ evaluates distances from each candidate to the current set; with capped set size,
\begin{equation}
T_{\text{DPP-lite}}=\Theta(M\cdot s_{\max}).
\end{equation}
The cross-model complementarity term compares each trace to its partner's traces (under a quality gate), yielding
\begin{equation}
T_{\text{cross}}=\Theta\!\big(NK_r^2\big)\ \text{ per round }r,
\end{equation}
which reduces to $\Theta(K^2 + K'^2)$ in the two-model setting and is small for our defaults ($K=2$, $K'=1$).

\textbf{(4) Total per-step complexity.}
Combining the above, the per-step runtime is
\begin{equation}
\begin{split}
T_{\text{step}}
=\Theta\!\Big(N(K+K')\cdot L\cdot C_{\text{gen}}\Big)
\;+\;\Theta\!\Big(M\cdot C_{\text{emb}}\\
\qquad\qquad\qquad\qquad\qquad\quad +\, M\cdot s_{\max} + NK + L\Big),
\end{split}
\end{equation}
where the first term (decoding) typically dominates, and the remaining terms are bounded by small constants under our capped budgets ($K,K',s_{\max},L_c$).
Thus, \methodname{} adds modest \emph{linear} overheads in the number of sampled traces, while the practical driver of wall-clock time remains the number/length of generated trajectories.

Overall, \methodname{} trades a small amount of extra coordination per training step for substantially improved \emph{team-level} reliability (reduced both-wrong mass) at a fixed small-model budget, and achieves competitive time-to-quality in a low-data regime. When compute or annotation budget is constrained, this "train less data, learn more from peers" behavior is a key part of \methodname{}'s efficiency.

\section{DPP-lite exploration reward properties}

\begin{proposition}[Separation of the diverse set]
\label{prop:separation}
Let $S$ be the diverse set returned by the greedy construction with margin $\delta$.
Then every pair of distinct traces $\tau,\tau'\in S$ satisfies $d(\tau,\tau')\ge \delta$.
\end{proposition}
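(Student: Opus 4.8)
The plan is to argue directly from the invariant maintained by greedy DPP-lite construction: a candidate trace is only admitted to $S$ when its minimum distance to the \emph{current} partial set already exceeds the margin $\delta$. First I would set up the notation for the greedy procedure explicitly: write $S_0=\{\tau_0\}$ for the seed (the first trace, e.g.\ the highest-reward one), and for $t\ge 1$ let $S_t = S_{t-1}\cup\{\tau_t\}$ where $\tau_t$ is the candidate admitted at step $t$. The key structural fact, which I would state as the admission rule, is that $\tau_t$ is admitted only if $\min_{\tau'\in S_{t-1}} d(\tau_t,\tau') \ge \delta$ (this is exactly the condition that makes $R_{\text{ee}}(\tau_t)$ nonnegative / the greedy gain positive before admission). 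If the seed is admitted unconditionally, that is fine since the claim is vacuous for $|S|\le 1$.

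Next I would prove the separation claim by considering an arbitrary pair $\tau,\tau'\in S$ with $\tau\ne\tau'$. Without loss of generality one of them was admitted later; say $\tau'$ was admitted at step $t$ and $\tau$ at some step $t'<t$, so $\tau\in S_{t-1}$. By the admission rule, $\min_{\sigma\in S_{t-1}} d(\tau',\sigma) \ge \delta$, and since $\tau\in S_{t-1}$ this minimum is at most $d(\tau',\tau)$; hence $d(\tau,\tau') = d(\tau',\tau) \ge \delta$. Since $\tau,\tau'$ were arbitrary, every pair in $S$ is at least $\delta$ apart. I would also note symmetry of $d$ is used here (the hybrid distance is manifestly symmetric: both the cosine term and the Jaccard term are symmetric in their arguments), so $d(\tau,\tau')=d(\tau',\tau)$ is legitimate.

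The main (mild) obstacle is really a matter of pinning down the precise greedy variant the paper intends, since the body only gives $R_{\text{ee}}$ for traces \emph{not} in $S$ and says $S$ is built "with a greedy DPP approximation." I would handle this by making the admission rule explicit as the defining property of the construction — i.e., state that a trace is added to $S$ exactly when its margin-adjusted minimum distance to the current set is nonnegative, which is the natural reading consistent with the $R_{\text{ee}}$ formula and with the explore-margin hyperparameter $\delta$. Once that is fixed, the proof is a one-line induction-free argument as above. One edge case worth a sentence: if $|S|\le 1$ there are no distinct pairs and the statement holds vacuously, so we may assume $|S|\ge 2$; and if distances can be zero for genuinely identical traces, de-duplication before or during the greedy step ensures distinct admitted traces, so the strict-distinctness hypothesis $\tau\ne\tau'$ combined with $\delta>0$ is consistent.
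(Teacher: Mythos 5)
Your proof is correct and takes essentially the same route as the paper's: both argue from the greedy admission rule that a trace enters $S$ only when its minimum distance to the current set is at least $\delta$, so the later-admitted member of any pair is at distance $\ge\delta$ from the earlier one. Your version merely makes explicit the details (ordering of admissions, symmetry of $d$, the vacuous $|S|\le 1$ case) that the paper leaves implicit.
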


\begin{proof}
A trace is added to $S$ only if its minimum distance to the current set is at least $\delta$.
This maintains the invariant that each newly added trace is at distance $\ge\delta$ from all existing members.
\end{proof}

\section{Policy optimization losses}
\label{app:losses}

For completeness, we summarize the loss functions implemented in \texttt{src/losses/}.
Let $r_t=\exp(\log\pi_\theta(a_t\mid s_t)-\log\pi_{\theta_{\text{old}}}(a_t\mid s_t))$ be the token-level importance ratio, and let $A$ denote a group-normalized advantage.

\paragraph{GRPO (token-level hard clipping).} \cite{shao2024deepseekmath}
GRPO uses PPO-style clipping at the token level:
\begin{equation}
\begin{aligned}
\mathcal{L}_{\mathrm{GRPO}}
&= -\mathbb{E}_t\Big[
\min\Big(
r_t A,\ 
\mathrm{clip}\!\big(r_t,1-\epsilon,1+\epsilon_{\text{high}}\big)A
\Big)
\Big] \\
&\quad + \beta\,\mathrm{KL}\!\big(\pi_\theta \,\Vert\, \pi_{\text{ref}}\big).
\end{aligned}
\end{equation}
\paragraph{GSPO (sequence-level ratio and clipping).} \cite{zheng2025gspo}
GSPO computes a sequence-level ratio using the geometric mean of token ratios:
\begin{equation}
 r_{\text{seq}}=\exp\Big(\tfrac{1}{|\tau|}\sum_t (\log\pi_\theta-\log\pi_{\theta_{\text{old}}})\Big),
\end{equation}
then applies tight sequence-level clipping before multiplying the mean log-probability of the sequence.

\paragraph{SAPO (soft sigmoid gate).}\cite{gao2025sapo}
SAPO replaces hard clipping with a smooth gate
$g_t=\sigma(\tau\,(r_t-1))\cdot (4/\tau)$,
with different temperatures for positive and negative advantages.
The loss uses gated advantages:
\begin{equation}
\mathcal{L}_{SAPO} = -\mathbb{E}_t\big[g_t\,A\,\log\pi_\theta(a_t\mid s_t)\big] + \beta\,\mathrm{KL}(\pi_\theta\Vert\pi_{\text{ref}}).
\end{equation}

\section{Additional benchmark results}
\label{app:additional_results}

We include a baseline comparison table for larger single models trained with SD-E$^2$ as well as an SD-E$^2$ oracle pair, evaluated on GSM8K, MATH, AIME, and GPQA.
This "collaboration vs.\ scaling/compute" question: \methodname{} constrains collaboration to a combined budget of at most $\sim$7B parameters, so we contextualize the gains against SD-E$^2$-trained 7B--8B single models and a compute-matched SD-E$^2$ two-model oracle.

\begin{table*}[t]
    \caption{\textbf{Additional baseline comparison (Pass@1).} SD-E$^2$ baselines include 7B--8B single models and an SD-E$^2$ oracle pair; \methodname{} uses a $\leq$7B two-model collaboration budget.}
    \label{tab:baseline_table}
    \centering
    \small
    \setlength{\tabcolsep}{5.5pt}
    \begin{tabular}{llcccc}
        \toprule
        \textbf{Model} & \textbf{Training} & \textbf{GSM8K} & \textbf{MATH} & \textbf{AIME} & \textbf{GPQA} \\
        \midrule
        Qwen2.5-7B-Instruct         & SD-E$^2$ & 86.21 & 44.00 & 26.00 & 30.00 \\
        Phi-3-small-8k-Instruct     & SD-E$^2$ & 78.00 & 34.00 & 20.00 & 26.00 \\
        Ministral-3-8B-Reasoning    & SD-E$^2$ & 84.50 & 47.00 & 51.50 & 33.00 \\
        \midrule
        Phi-4-mini + Ministral-3-3B + Oracle & SD-E$^2$ & 86.63 & 71.85 & 54.42 & 48.12 \\
        Phi-4-mini + Ministral-3-3B          & \textbf{\methodname{}} & \textbf{97.04} & \textbf{88.42} & \textbf{71.42} & \textbf{70.18} \\
        \bottomrule
    \end{tabular}
\end{table*}

The 7B--8B SD-E$^2$ single-model baselines provide a reference for capacity and within-model diverse-exploration training. The AIME values in Table~\ref{tab:baseline_table} are re-evaluated with the same 4096-token generation budget used in the main experiments to avoid truncating long contest-style solutions. Nevertheless, the \methodname{} team substantially exceeds these larger single models on all four benchmarks, despite operating under a smaller total parameter budget. This suggests that in the low-data post-training regime, many residual errors are not purely capacity-limited; they are often driven by systematic plan selection mistakes, brittle heuristics, or correlated blind spots that persist even when a model is encouraged to explore multiple traces.

SD-E$^2$ strengthens a \emph{single} policy by promoting trace diversity, but correlated failure can still dominate when the model repeatedly chooses the same wrong decomposition or elimination strategy.
In contrast, \methodname{} promotes \emph{cross-model} complementarity: when one collaborator finds a viable path, that success becomes a training signal for the partner, and complementarity rewards discourage both models from collapsing onto the same dominant reasoning mode. As a result, the probability that \emph{both} models fail decreases more aggressively than what is typically achieved by within-model diversity alone.

A strong compute-matched control is the SD-E$^2$ oracle pair (Phi-4-mini + Ministral-3-3B + Oracle), which already benefits from having two independently trained models and an oracle selector. That \methodname{} improves over this oracle across datasets indicates that gains cannot be explained by "just using two models" or by additional test-time compute. Instead, cross-teaching reshapes the learned policies: it improves individual competence while also shifting correct probability mass into complementary regions of the input space, thereby reducing overlap in errors and increasing oracle team success.

The largest gaps appear on the harder benchmarks (AIME/GPQA), where early wrong decisions often cascade and within-model sampling produces near-duplicate failures. Here, peer context provides a high-bandwidth corrective signal (e.g., a missing lemma, a successful elimination pattern, or a useful substitution), making collaboration training particularly effective.
On GSM8K, although the task is easier, \methodname{} still closes much of the remaining reliability gap by reducing residual correlated arithmetic and plan-selection failures.

It can be concluded through Table~\ref{tab:baseline_table} that \methodname{} delivers a qualitatively different improvement than modest parameter scaling or within-model diversity training: it explicitly targets correlated failures via training-time collaboration, yielding higher oracle reliability under tight data and compute budgets.

\section{Comparison with MALT and MAPoRL}
\label{app:malt_maporl}

Table~\ref{tab:malt_maporl_comparison} summarizes the closest collaborative training baselines raised during review. The comparison is not intended as a perfectly controlled leaderboard because model families, training data, and metrics differ; rather, it clarifies the algorithmic distinction. MALT uses a sequential role-specialized pipeline (generator, verifier, refiner) with trajectory expansion and value-style credit assignment, whereas MAPoRL uses multi-agent PPO with a verifier reward over multi-turn discussions. \methodname{} instead uses instance-conditioned cross-teaching: a correct peer trace on the same problem becomes a hint for failed collaborators, and the rescue reward directly trains recovery on disagreement cases.

\begin{table*}[t]
\caption{\textbf{Comparison to related collaborative/multi-agent post-training methods.} Reported numbers use each paper's own setting and are included to clarify evaluation context, not as a strictly controlled leaderboard.}
\label{tab:malt_maporl_comparison}
\centering
\small
\setlength{\tabcolsep}{3.0pt}
\begin{tabular}{p{0.15\textwidth}p{0.22\textwidth}p{0.30\textwidth}p{0.25\textwidth}}
\toprule
\textbf{Method} & \textbf{Collaboration stage} & \textbf{Learning signal} & \textbf{Representative setting} \\
\midrule
MALT~\citep{motwani2025malt} & Sequential multi-agent post-training & Search tree with value-style credit assignment & Llama-3.1-8B, MV@3 on GSM8K/MATH/CSQA \\
MAPoRL~\citep{park2025maporl} & Multi-agent RL co-training & Verifier reward over multi-turn discussion & Phi-3-mini scale, GSM8K/ANLI \\
\methodname{} & On-policy cross-teaching & Peer success on the same instance plus rescue reward & 3B--4B pairs, GSM8K/MATH/AIME/GPQA \\
\bottomrule
\end{tabular}
\end{table*}

The main methodological distinction is that \methodname{} creates a targeted correction signal only when a peer succeeds on the same instance. This differs from fixed role specialization, generic discussion rewards, and inference-time debate: the learning target is the disagreement set identified online during training. These methods are therefore complementary; for example, a verifier from MAPoRL or a MALT-style refiner could be used as a downstream selector for \methodname{} candidates.

\section{Hyperparameter and Distance Sensitivity}
\label{app:sensitivity}

Reviewers asked whether the reward design is fragile due to the number of components. Table~\ref{tab:hparam_sensitivity} sweeps the main task-specific parameters on MATH for the Phi-4-mini + Ministral-3B pair. Performance varies by at most about 0.9 percentage points across the tested ranges, suggesting that \methodname{} is not highly sensitive to exact tuning within reasonable ranges.

\begin{table}[t]
\caption{\textbf{Hyperparameter sensitivity on MATH.} Values are oracle Team Pass@2 for the Phi-4-mini + Ministral-3B pair.}
\label{tab:hparam_sensitivity}
\centering
\scriptsize
\setlength{\tabcolsep}{2.6pt}
\begin{tabular}{lcc}
\toprule
\textbf{Param.} & \textbf{Values tested} & \textbf{Team P@2} \\
\midrule
$w_1$ & $0.5/1.0/1.5$ & $91.22/92.08/91.85$ \\
$w_2$ & $0.1/0.2/0.4$ & $91.45/92.08/91.52$ \\
$r_{\text{teach}}$ & $0.25/0.50/0.75/1.0$ & $92.08/92.15/91.90/91.62$ \\
$p_{\text{hint}}$ & $0.5/0.75/1.0$ & $91.28$--$92.08$ \\
\bottomrule
\end{tabular}
\end{table}

We also ablate the distance metric used by DPP-lite and $R_{\text{cross}}$. Table~\ref{tab:distance_sensitivity} shows that replacing the hybrid semantic/structural distance with cosine-only distance produces only a small drop on AIME. Thus, the core gain comes from cross-teaching, while the hybrid distance provides a modest benefit by capturing operation-level differences that embeddings alone may miss.

\begin{table}[t]
\caption{\textbf{Distance metric sensitivity on AIME.} Values are oracle Team Pass@2 for the Phi-4-mini + Ministral-3B pair.}
\label{tab:distance_sensitivity}
\centering
\small
\begin{tabular}{lc}
\toprule
\textbf{Distance metric} & \textbf{AIME Team P@2} \\
\midrule
Hybrid semantic/structural (default) & 79.65 \\
Cosine-only & 78.70 \\
\bottomrule
\end{tabular}
\end{table}

\section{Scaling Beyond Two Collaborators}
\label{app:scaling}

The micro-round protocol extends naturally to $N>2$: \textsc{Round A} selects the best successful trace across all models, and \textsc{Round B} conditions failed models on the selected teacher context. Table~\ref{tab:three_model_scaling} reports a preliminary 3-model AIME experiment. Adding Ministral-3-8B-Reasoning improves oracle Team Pass@2 by 2.95 percentage points over the 2-model team, suggesting that additional heterogeneous collaborators can contribute complementary correctness when their error sets are not redundant.

\begin{table}[t]
\caption{\textbf{Preliminary scaling to three models on AIME.}}
\label{tab:three_model_scaling}
\centering
\small
\begin{tabular}{lc}
\toprule
\textbf{Configuration} & \textbf{Oracle Team Pass@2} \\
\midrule
2-model \methodname{} & 79.65 \\
3-model \methodname{} & 82.60 \\
\bottomrule
\end{tabular}
\end{table}

\section{Cross-dataset Generalization}
\label{app:transfer}

To test whether \methodname{} learns reusable reasoning strategies rather than only dataset-specific shortcuts, we evaluate AIME-trained \methodname{} models on MATH and GSM8K without additional fine-tuning. Table~\ref{tab:cross_dataset_transfer} shows favorable transfer: the AIME-trained team outperforms the base pair and the SD-E$^2$ oracle baseline, though per-dataset \methodname{} remains strongest.

\begin{table}[t]
\caption{\textbf{Zero-shot transfer from AIME-trained \methodname{}.} Values are Pass@1.}
\label{tab:cross_dataset_transfer}
\centering
\small
\begin{tabular}{lcc}
\toprule
\textbf{Setting} & \textbf{MATH} & \textbf{GSM8K} \\
\midrule
Base pair oracle & 59.25 & 74.85 \\
SD-E$^2$ oracle (per-dataset) & 71.85 & 83.25 \\
\methodname{} (AIME-trained $\rightarrow$ transfer) & 75.90 & 86.35 \\
\methodname{} Team (per-dataset) & 88.42 & 95.72 \\
\bottomrule
\end{tabular}
\end{table}

\section{AIME Split and Contamination Considerations}
\label{app:aime_contamination}

We use the AIME dataset from 1983--2025 and construct an 80:20 split (770 train / 193 test). Because many AIME problems are publicly available, contamination risk cannot be ruled out for any modern LLM. Our main conclusions rely on relative improvements under the same pretrained models and evaluation split: Base, SD-E$^2$, and \methodname{} are evaluated identically, so memorization risk affects all rows. We nevertheless report model-release cutoff considerations in Table~\ref{tab:aime_contamination} and treat absolute AIME accuracy with appropriate caution.

\begin{table*}[t]
\caption{\textbf{AIME contamination considerations.} Cutoffs are approximate and intended only to contextualize absolute scores; relative comparisons use the same models and test split.}
\label{tab:aime_contamination}
\centering
\small
\setlength{\tabcolsep}{4pt}
\begin{tabular}{lccc}
\toprule
\textbf{Model} & \textbf{Estimated training cutoff} & \textbf{AIME 2024 in data?} & \textbf{AIME 2025 in data?} \\
\midrule
Qwen2.5-3B-Instruct & late 2023 & unlikely & no \\
Qwen3-4B-Instruct & around Mar. 2025 & likely & possibly \\
Phi-4-mini-reasoning & around Jun. 2024 & likely & no \\
Ministral-3-3B-Reasoning & late 2024 & likely & unlikely \\
\bottomrule
\end{tabular}
\end{table*}

\section{Ablations and Design Choices}
\label{app:ablations}

We ablate the main components of \methodname{} to isolate which mechanisms drive gains.
To keep ablations comparable to the main results, we report \textbf{oracle Team Pass@$K$} under the same Pass@1/Pass@2 protocol used in Tables~\ref{tab:qwen_table_all} and~\ref{tab:table2}.
Unless otherwise noted, we use the default configuration ($K{=}2, K'{=}1$, hint dropout, DPP-lite, rescue bonus, and $R_{\text{cross}}$ enabled).

Table~\ref{tab:ablations_qwen_gsm8k} reports GSM8K team performance for the \texttt{Qwen2.5-3B-Instruct} + \texttt{Qwen3-4B-Instruct} pair, while Table~\ref{tab:ablations_table2} presents corresponding ablations for the stronger reasoning pair (\texttt{Phi-4-mini-reasoning} + \texttt{Ministral-3-3B-Reasoning}) on the harder benchmarks MATH, AIME, and GPQA.

\begin{table}[t]
    \caption{\textbf{Ablations on GSM8K for the Qwen team (oracle Team Pass@1/Pass@2).}}
    \label{tab:ablations_qwen_gsm8k}
    \centering
    \small
    \setlength{\tabcolsep}{4.0pt}
    \begin{tabular}{lcccc}
        \toprule
        \textbf{Variant} & \textbf{P@1} & $\Delta$ & \textbf{P@2} & $\Delta$ \\
        \midrule
        \methodname{} Team (Full) & 99.10 & -- & 99.54 & -- \\
        \quad w/o \textsc{Round B} (no rescue) & 98.20 & -0.90 & 98.90 & -0.64 \\
        \quad w/o Explore Reward & 98.70 & -0.40 & 99.20 & -0.34 \\
        \quad w/o Cross-Model Reward & 98.50 & -0.60 & 99.05 & -0.49 \\
        \midrule
        $K{=}2, K'{=}1$ (default) & 99.10 & -- & 99.54 & -- \\
        $K{=}4, K'{=}2$ & 99.20 & +0.10 & 99.60 & +0.06 \\
        \bottomrule
    \end{tabular}
\end{table}

\begin{table*}[t]
    \caption{\textbf{Ablations on MATH/AIME/GPQA.} Values are Pass@1/Pass@2 for Phi-4-mini + Ministral-3B.}
    \label{tab:ablations_table2}
    \centering
    \small
    \setlength{\tabcolsep}{3.2pt}
    \begin{tabular}{lcccccc}
        \toprule
        \textbf{Variant} &
        \multicolumn{2}{c}{\textbf{MATH}} &
        \multicolumn{2}{c}{\textbf{AIME}} &
        \multicolumn{2}{c}{\textbf{GPQA}} \\
        \cmidrule(lr){2-3} \cmidrule(lr){4-5} \cmidrule(lr){6-7}
        & P@1 & P@2 & P@1 & P@2 & P@1 & P@2 \\
        \midrule
        \methodname{} Team (Full) & 88.42 & 92.08 & 71.42 & 79.65 & 70.18 & 77.34 \\
        \quad w/o \textsc{Round B} (no rescue) & 84.50 & 88.90 & 66.20 & 74.80 & 64.80 & 72.20 \\
        \quad w/o Explore Reward & 86.70 & 90.60 & 69.40 & 78.10 & 68.00 & 75.90 \\
        \quad w/o Cross-Model Reward & 85.90 & 89.90 & 68.60 & 77.40 & 66.90 & 74.60 \\
        \quad Single Model (M1 only) & 85.24 & 90.01 & 68.16 & 76.64 & 67.76 & 75.26 \\
        \bottomrule
    \end{tabular}
\end{table*}

Across both the easier GSM8K regime (Table~\ref{tab:ablations_qwen_gsm8k}) and the harder benchmarks (Table~\ref{tab:ablations_table2}), removing \textsc{Round B} yields the largest degradation.
This supports the core mechanism of \methodname{}: when one model succeeds in \textsc{Round A}, its trace becomes a targeted correction signal for its partner on exactly the disagreement set.
On the strong pair, this effect is especially pronounced (e.g., MATH P@2 drops from 92.08 to 88.90; GPQA P@2 drops from 77.34 to 72.20), reflecting that many remaining errors are \emph{recoverable} given a correct peer plan or elimination pattern.

On GSM8K, removing $R_{\text{cross}}$ yields a modest team drop because performance is already near-saturated and the both-wrong mass is tiny under Pass@2.
However, on MATH/AIME/GPQA, ablating $R_{\text{cross}}$ consistently harms both Pass@1 and Pass@2 (Table~\ref{tab:ablations_table2}), indicating that explicit inter-model repulsion helps preserve complementary high-quality modes when the search space is larger and correlated failures are more common. Intuitively, DPP-lite prevents \emph{within-model} trace collapse, while $R_{\text{cross}}$ prevents \emph{between-model} redundancy--the latter becomes more important as tasks demand multiple distinct solution templates.

Ablating DPP-lite ("w/o Explore Reward") hurts performance consistently, but the magnitude is smaller than removing rescue on hard datasets. This suggests that diversity is most valuable as a guardrail against mode collapse rather than as the primary source of gains: it increases the probability that at least one trace explores a different reasoning path, which in turn enables more frequent successful teaching events. Increasing $K,K'$ provides only marginal improvements on GSM8K (Table~\ref{tab:ablations_qwen_gsm8k}), implying that once policies are strong, simply sampling more traces yields limited additional benefit. In contrast, the larger improvements from \textsc{Round B} and complementarity rewards indicate that the key bottleneck is not just candidate generation, but reshaping policies to reduce correlated failures and improve recovery dynamics.

Finally, Table~\ref{tab:ablations_table2} shows that the single-model baseline (M1 only) underperforms the full team, even though it shares the same backbone and training recipe.
This supports the central claim: \methodname{} is not merely regularizing a single policy, but actively reallocating correctness mass across collaborators by teaching on disagreement instances and discouraging redundant high-quality reasoning modes.

%%%%%%%%%%%%%%%%%%%%%%%%%%%%%%%%%%%%%%%%%%%%%%%%%%%%
\begin{figure*}[t]
  \centering
  \includegraphics[width=\textwidth]{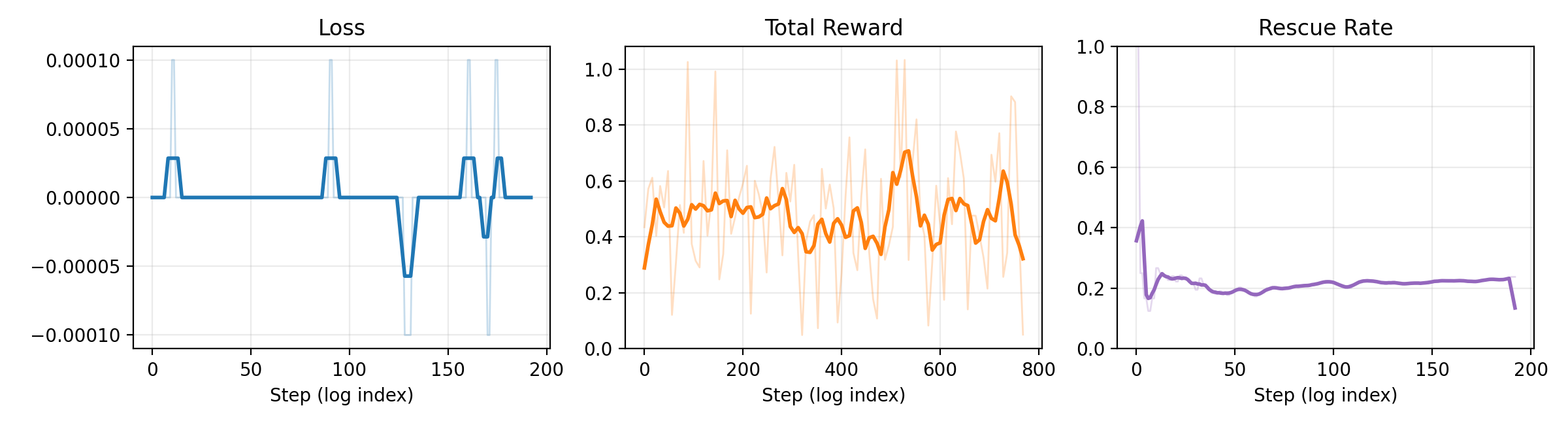}
  \caption{Train curves for AIME with \texttt{Phi-4-mini-reasoning} and \texttt{Ministral-3-3B-Reasoning}}
  \label{fig:train_curves1}
\end{figure*}
%%%%%%%%%%%%%%%%%%%%%%%%%%%%%%%%%%%%%%%%%%%%%%%%%%%%

%%%%%%%%%%%%%%%%%%%%%%%%%%%%%%%%%%%%%%%%%%%%%%%%%%%%
\begin{figure*}[t]
  \centering
  \includegraphics[width=\textwidth]{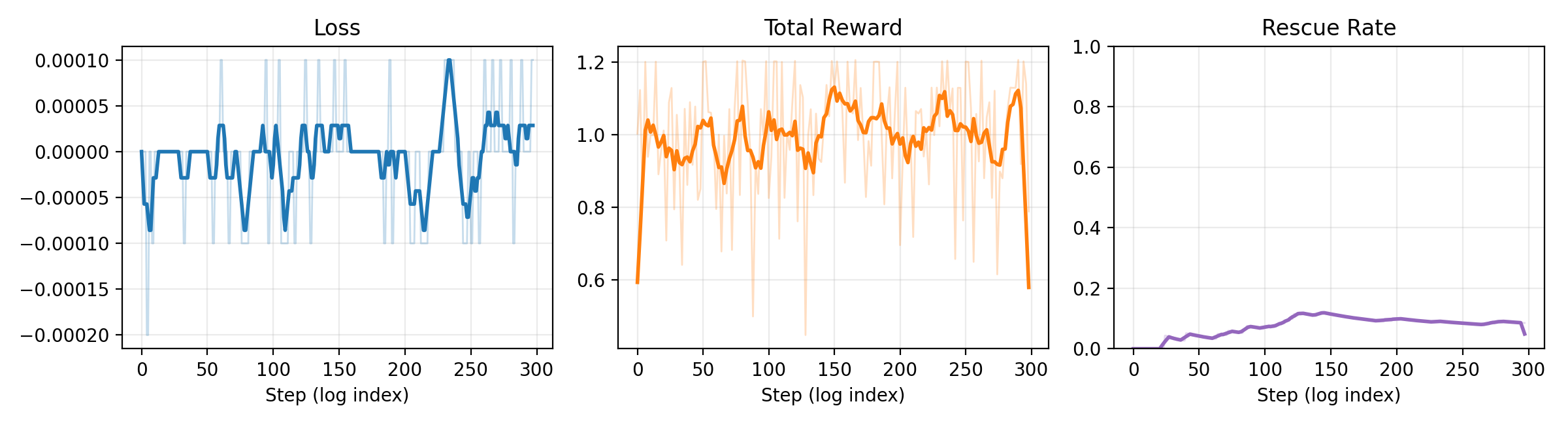}
  \caption{Train curves for GSM8K with \texttt{Phi-4-mini-reasoning} and \texttt{Ministral-3-3B-Reasoning}}
  \label{fig:train_curves2}
\end{figure*}
%%%%%%%%%%%%%%%%%%%%%%%%%%%%%%%%%%%%%%%%%%%%%%%%%%%%%%%%%%%%%%%%%%%%%%%%%%%%%%%%%%%%%%%%%%%%%%%%%%%%%%%%
\begin{figure*}[t]
  \centering
  \includegraphics[width=\textwidth]{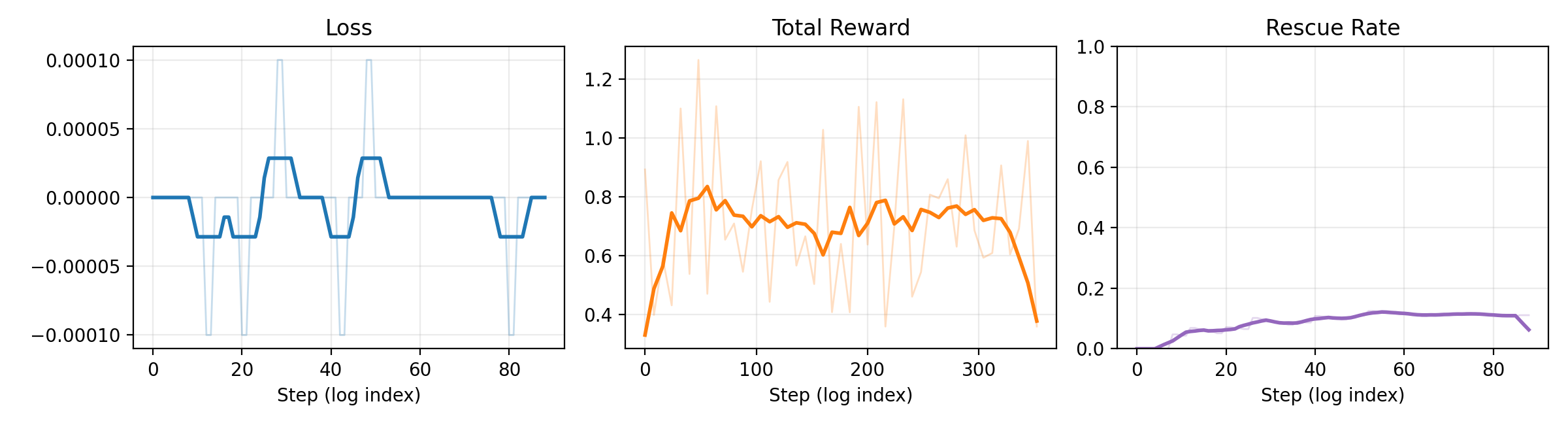}
  \caption{Train curves for GPQA with \texttt{Phi-4-mini-reasoning} and \texttt{Ministral-3-3B-Reasoning}}
  \label{fig:train_curves3}
\end{figure*}
%%%%%%%%%%%%%%%%%%%%%%%%%%%%%%%%%%%%%%%%%%%%%%%%%%%%%%%%

%%%%%%%%%%%%%%%%%%%%%%%%%%%%%%%%%%%%%%%%%%%%%%%%
\begin{figure*}[t]
  \centering
  \includegraphics[width=\textwidth]{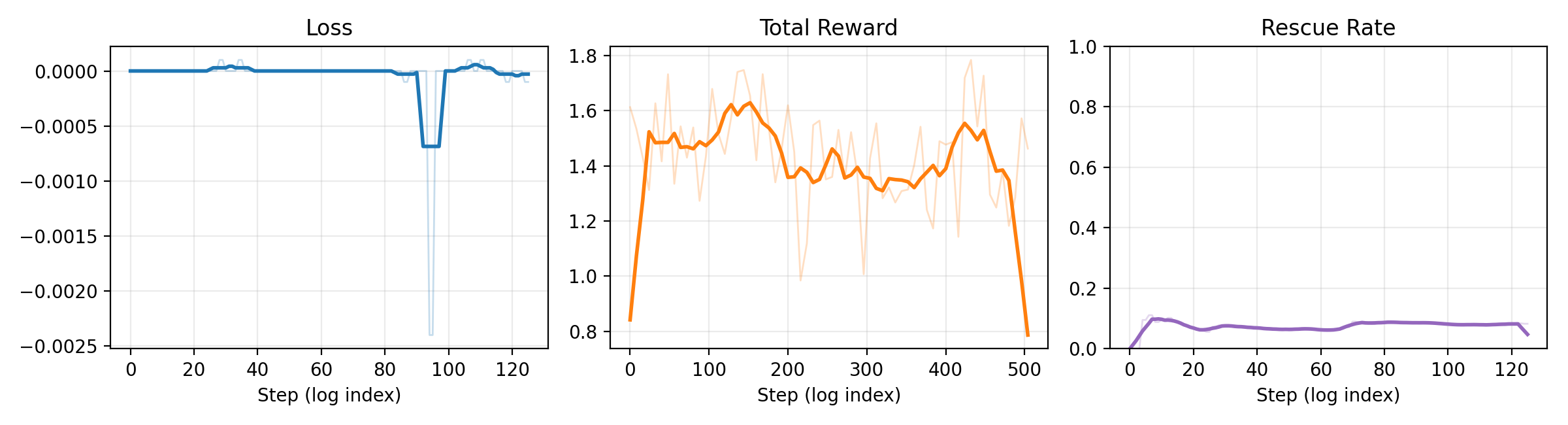}
  \caption{Train curves for MATH with \texttt{Phi-4-mini-reasoning} and \texttt{Ministral-3-3B-Reasoning}}
  \label{fig:train_curves4}
\end{figure*}
%%%%%%%%%%%%%%%%%%%%%%%%%%%%%%%%%%%%%%%%%%%%%%%%%%%%%%%%

\section{Discussion}
\label{sec:discussion}

This section interprets the quantitative gains in Tables~\ref{tab:qwen_table_all} and~\ref{tab:table2}, the mechanism diagnostics in Section~\ref{sec:results_mechanism_summary}, and the component contributions in the ablations (Tables~\ref{tab:ablations_qwen_gsm8k} and~\ref{tab:ablations_table2}). We close with a qualitative analysis grounded in Table~\ref{tab:qualitative} and practical implications for deployment.

Team Pass@$K$ is an \emph{oracle} reliability measure: it reports whether \emph{any} of the $K$ samples from either collaborator is correct. Although it assumes an ideal selector, it is the cleanest diagnostic for \textbf{error overlap} and the \textbf{upper bound of selection}. In particular, Team Pass@$K$ directly reflects the probability of \emph{correlated failure} (the "both-wrong" mass), which is the limiting factor for multi-model systems under small sampling budgets.

The results show that \methodname{} increases Team Pass@$K$ for both pairs while also raising individual Pass@$K$ (Tables~\ref{tab:qwen_table_all},~\ref{tab:table2}).
This matters because oracle improvements can arise from two different sources:
(i) \emph{pure competence} (each model becomes more accurate), or
(ii) \emph{de-correlation} (models make different errors).
\methodname{} targets both: \textsc{Round B} converts a peer's success into an instance-conditioned correction signal, and $R_{\text{cross}}$ discourages redundant high-quality modes across collaborators.

A compute-matched baseline for collaboration is an oracle over two independently trained models (Tables~\ref{tab:qwen_table_all},~\ref{tab:table2}, Base/SD-E$^2$ oracle rows). These baselines already benefit from two-shot ensembling and diversity from different initializations/backbones.
However, \methodname{} still yields substantially larger Team Pass@$K$ gains, indicating that the improvement is not merely a test-time selection effect. The key difference is that \methodname{} couples the collaborators during training.
When one model succeeds in \textsc{Round A}, its successful trace provides a high-bandwidth hint for the partner in \textsc{Round B} on the \emph{same instance}. This converts sparse correctness signals into dense, targeted learning on exactly the disagreement set. Ablations confirm this: removing \textsc{Round B} causes the largest degradation for both the saturated GSM8K Qwen team (Table~\ref{tab:ablations_qwen_gsm8k}) and the stronger pair on harder datasets (Table~\ref{tab:ablations_table2}).

Within-model exploration (DPP-lite) combats mode collapse among a model's own samples, but does not explicitly prevent two models from converging to the \emph{same} strong-but-brittle strategy.
The cross-model term $R_{\text{cross}}$ addresses this by discouraging redundant high-quality traces across collaborators.
This distinction explains why $R_{\text{cross}}$ is modestly helpful on near-ceiling GSM8K but more consistently valuable on MATH/AIME/GPQA (Table~\ref{tab:ablations_table2}), where correlated failure dominates oracle limits.

The Qwen pair begins weaker but reaches near-ceiling team performance on GSM8K under \methodname{} (Table~\ref{tab:qwen_table_all}).
In contrast, on AIME/GPQA the Qwen team remains far from saturated, suggesting that some failures are driven by missing longer-horizon insights, factual knowledge gaps, or brittle transformations that are not reliably transferred by hints. The Phi+Ministral pair begins substantially stronger (Table~\ref{tab:table2}) and benefits more sharply on the harder benchmarks:
Team Pass@2 rises to 92.08 on MATH and 77.34 on GPQA, indicating that cross-teaching is especially effective when (i) each model has enough base competence to occasionally solve the instance, and (ii) remaining errors are often \emph{recoverable} given a correct peer decomposition or elimination rationale.
This aligns with the "teacher-available" region: \methodname{} can only rescue when at least one collaborator succeeds in \textsc{Round A}; as models become more competent, this region expands, creating a positive feedback loop in training.

\begin{table*}[t]
    \caption{\textbf{Qualitative examples from pass@1 evaluation logs.} M1 is Phi-4-mini; M2 is Ministral-3B. Entries show final answers extracted from the \texttt{<final\_answer>} tag when present. Row colors: blue = M2 correct, green = M1 correct, orange = both wrong.}
    \label{tab:qualitative}
    \centering
    \small
    \setlength{\tabcolsep}{3pt}
    \begin{tabular}{l p{0.42\textwidth} c c c p{0.16\textwidth}}
        \toprule
        \textbf{Dataset} & \textbf{Problem (abbrev.)} & \textbf{GT} & \textbf{M1} & \textbf{M2} & \textbf{Notes} \\
        \midrule
        \rowcolor{blue!6}
        GSM8K & Jill earns \$20/hr teaching and \$30/hr coaching, 35+15 hrs/week for 50 weeks; annual salary? & \texttt{57500} & \texttt{575} & \texttt{57500} & M2 correct; M1 truncates zeros. \\
        \rowcolor{blue!6}
        AIME & Two cubics share a complex root $m+\sqrt{n}\,i$; given real roots $-20$ and $-21$, find $m+n$. & \texttt{330} & \texttt{310} & \texttt{330} & M2 correct; M1 underestimates $m+n$. \\
        \rowcolor{blue!6}
        AIME & Isosceles right triangle with interior point $P$ s.t.\ $\angle PAB=\angle PBC=\angle PCA$ and $AP=10$; find area. & \texttt{250} & \texttt{200} & \texttt{250} & M2 correct; M1 misses geometric constraint. \\
        \rowcolor{green!7}
        MATH & Let $r$ be a root of $x^2+5x+7=0$; compute $(r-1)(r+2)(r+6)(r+3)$. & \texttt{13} & \texttt{13} & \texttt{001} & M1 correct; M2 collapses to small value. \\
        \rowcolor{blue!6}
        MATH & Integer roots of $2x^4+4x^3-5x^2+2x-3=0$ (comma-separated). & \texttt{1,-3} & \texttt{ANSWER} & \texttt{1, -3} & M2 correct; M1 formatting failure. \\
        \rowcolor{blue!6}
        GPQA & Enthalpy of neutralization for mixed HCl/H$_2$SO$_4$/Ba(OH)$_2$ (MCQ). & \texttt{A} & \texttt{B} & \texttt{A} & M2 correct; M1 picks distractor. \\
        \rowcolor{orange!8}
        GPQA & NaNH$_2$ on 1-bromobenzene-2-d: number of organic products (MCQ). & \texttt{A} & \texttt{C} & \texttt{B} & Both wrong; likely benzyne-mechanism confusion. \\
        \bottomrule
    \end{tabular}
\end{table*}

\subsection{Deployment Implications}
\label{sec:discussion_selection}

A practical system must select a single answer without oracle access.
Our results suggest that \methodname{} is a particularly strong fit for settings where selection is imperfect but improving:
\begin{itemize}
    \item \textbf{Verifier-based selection.} Since \methodname{} increases the probability that at least one candidate is correct, verifier capacity is spent distinguishing a smaller set of plausible solutions rather than searching for correctness from scratch. This is precisely the regime where lightweight verifiers tend to be most effective.
    \item \textbf{Disagreement-triggered routing.} Because collaboration gain concentrates on disagreement instances, a simple router can invoke the second model or additional sampling only when the first model's answer is uncertain (low log-prob margin) or when two collaborators disagree. This targets compute to the hard tail while keeping average serving cost low.
    \item \textbf{Agreement as a confidence heuristic.} Although agreement does not guarantee correctness, \methodname{} reduces redundant failure modes, so agreement becomes more informative: if both models converge, it is more likely to reflect a robust shared reasoning mode rather than a shared shortcut.
\end{itemize}
Importantly, these strategies are complementary to \methodname{}: better selection converts oracle gains into deployable gains, and \methodname{} makes selection easier by increasing complementary correctness.

\subsection{Qualitative Analysis}
\label{sec:qualitative}

We inspected pass@1 evaluation traces for GSM8K, MATH, AIME, and GPQA in \texttt{evaluation\_results} and compared model outputs from
M1 (Phi-4-mini) and M2 (Ministral-3B). Table~\ref{tab:qualitative} highlights representative cases where collaboration matters.
Two consistent patterns emerge. First, \textbf{complementary rescue}: M2 often resolves geometry or domain-knowledge questions that M1 misses (AIME/GPQA), while M1 is more reliable on compact algebraic simplifications (MATH). Second, \textbf{format sensitivity}: several incorrect outcomes are due to malformed or missing final answers (e.g., not listing comma-separated roots), which hurts conditional accuracy even when a partial derivation is plausible. This suggests that lightweight answer-normalization and stricter format prompts can further improve deployable accuracy without changing the underlying reasoning model.

At a finer granularity, we observe dataset-specific tendencies: (i) \textbf{GSM8K}: errors often stem from scale and formatting
slips (e.g., missing zeros) rather than algebraic reasoning; (ii) \textbf{AIME}: M2 more consistently handles geometric constraint
propagation and symmetry arguments, while M1 is more brittle when the solution depends on a specific configuration or auxiliary
construction; (iii) \textbf{MATH}: M1 does better on compact algebraic products and substitutions, but both models are sensitive to answer-format conventions (intervals, ordered pairs, and multi-root lists); (iv) \textbf{GPQA}: chemistry and physics questions
exhibit the most pronounced knowledge gaps, where both models can latch onto plausible distractors unless stoichiometry or
mechanistic details are explicit. These trends mirror the quantitative gains in Table~\ref{tab:table2}: collaboration helps most where the models’ error sets are complementary rather than identical.

\subsection{Limitations and Open Problems}
\label{sec:discussion_limits}
Cross-teaching activates only when at least one collaborator succeeds in \textsc{Round A}.
On extremely hard instances, both models fail and the method reduces to exploration/exploitation. Combining \methodname{} with retrieval, tool-use, or a verifier that can identify partially correct solutions could expand the teacher-available region. The reward structure extends naturally to $N{>}2$, but coordination raises questions:
which teacher to trust, how to allocate sampling budgets, and how to prevent group collapse onto a single dominant style.
Structured teacher selection and curriculum strategies (e.g., rotating teachers, difficulty-based routing) are promising directions.

\end{document}